\newcommand{\e}{\varepsilon}
\begin{document}
\newtheorem{dfn}{Definition}
\newtheorem{lem}[dfn]{Lemma}
\newtheorem{prop}[dfn]{Proposition}
\newtheorem{thm}[dfn]{Theorem}
\newtheorem{cor}[dfn]{Corollary}
\newtheorem{fact}[dfn]{Fact}
\newtheorem {exa}[dfn]{Example}
\newcommand{\vsp}{\vspace{6mm}}
\newcommand{\baralp}[2]{\mbox{$\bar{\alpha}_{#1}^{( #2 )}$}}
\newcommand{\alp}[2]{\mbox{$\alpha_{#1}^{( #2 )}$}}
\newcommand{\vex}[1]{\vec{ #1}}
\newcommand{\V}[1]{\mbox{$V^{(#1)}$}}
\newcommand{\VK}[2]{\mbox{$V^{(#1)}(#2)$}}
\newcommand{\SL}[1]{\mbox{$SL^{(#1)}$}}
\newcommand{\LA}[1]{\mbox{$L^{(#1)}$}}
\newcommand{\fn}{\mbox{$f^{(n)}$}}
\newcommand{\pr}[1]{\mbox{$Prob^{(#1)}$}}

\title{Emerging Dimension Weights in a Conceptual Spaces Model of Concept Combination}

\author{Martha Lewis \and Jonathan Lawry \institute{University of Bristol, England, email: martha.lewis@bristol.ac.uk, j.lawry@bristol.ac.uk} }

\maketitle
\bibliographystyle{AISB}

\begin{abstract}
We investigate the generation of new concepts from combinations of properties as an artificial language develops. To do so, we have developed a new framework for conjunctive concept combination. This framework gives a semantic grounding to the weighted sum approach to concept combination seen in the literature. We implement the framework in a multi-agent simulation of language evolution and show that shared combination weights emerge. The expected value and the variance of these weights across agents may be predicted from the distribution of elements in the conceptual space, as determined by the underlying environment, together with the rate at which agents adopt others' concepts. When this rate is smaller, the agents are able to converge to weights with lower variance. However, the time taken to converge to a steady state distribution of weights is longer.
\end{abstract}

\section{INTRODUCTION}
Humans are skilled at making sense of novel combinations of concepts, so to create artificial languages for implementation in AI systems, we must model this ability. Standard approaches to combining concepts, e.g. fuzzy set theory, have been shown to be inadequate \cite{osh1981}. Concepts formed through the combination of properties frequently have `emergent attributes' \cite{hamp1987} which cannot be explicated by decomposing the label into its constituent parts.  We have developed a model of concept combination within the label semantics framework as given in \cite{lawry2009, lewishierarchical}. The model is inspired by and reflects results in \cite{hamp1987}, in which membership in a compound concept can be rendered as the weighted sum of memberships in individual concepts, however, it can also account for emergent attributes, where e.g. importance in a conjunctive concept is greater than the importance of an attribute in the constituent concepts. We implement a simple version of this model in a multi-agent simulation of language users, and show that the agents converge to shared combination weights, allowing effective communication. These weights are determined by the distribution of objects in the agents' conceptual space. This provides a theoretical grounding to the proposal seen in the literature \cite{gard2004,hamp1987,lakoffhedges,zadehhedges} that complex concepts can be characterised as weighted sums of attributes. Further, it relates the weights in the combined concept to the external world. In this paper, we firstly summarise the theoretical framework we use (section \ref{sec:bkg}), and in section \ref{sec:cmp} give a brief account of our model of concept combination. In section \ref{sec:expts}, we implement a simple version of our model in a multi-agent simulation of a community of language users in order to examine whether an how such a community is able to converge to shared combination weights. We give simulation methods and results, and analyse the behaviour of the agents. Finally, section \ref{sec:disc} discusses our results and gives an indication of future work.

\section{BACKGROUND}
\label{sec:bkg}
We model concepts within the label semantics framework \cite{lawry2004, lawry2009}, combined with prototype theory \cite{rosch} and the conceptual spaces model of concepts \cite{gard2004}. Prototype theory offers an alternative to the classical theory of concepts, basing categorization on proximity to a prototype. This approach is based on experimental results where human subjects were found to view membership in a concept as a matter of degree, with some objects having higher membership than others \cite{rosch}. Fuzzy set theory \cite{zadeh1965}, in which an object $x$ has a graded membership $\mu_L(x)$ in a concept $L$, was proposed as a formalism for prototype theory. However, numerous objections to its suitability have been made \cite{ hamp, hamp1987, kp, osh1981, smith}. 

Conceptual spaces theory renders concepts as convex regions of a \emph{conceptual space} - a geometrical structure with quality dimensions and a distance metric. Examples are: the RGB colour cube, pictured in figure \ref{fig:rgbcube}; physical dimensions of height, breadth and depth; or the taste tetrahedron. Since concepts are convex regions of such spaces, the centroid of such a region can naturally be viewed as the prototype of the concept.

\begin{figure}[h]
\centering
\includegraphics[width = 0.3\textwidth]{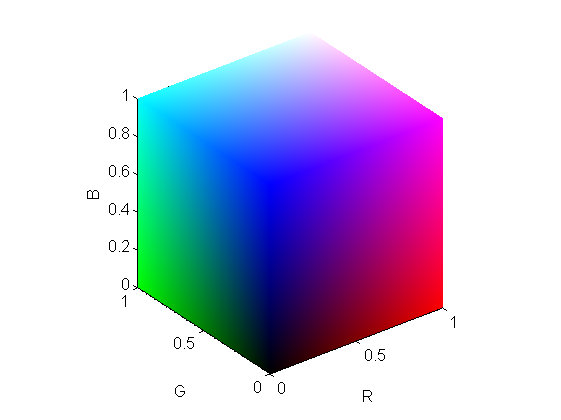}
\caption{The RGB cube represents colours in three dimensions of Red, Green and Blue. A colour concept such as `purple' can be represented as a region of this conceptual space.}
\label{fig:rgbcube}
\end{figure}

Label semantics proposes that agents use a set of labels $LA = \{L_1, ..., L_n\}$ to describe a conceptual space $\Omega$ with distance metric $d(x, x')$. Labels $L_i$ are associated with prototypes $P_i \subseteq \Omega$ and uncertain thresholds $\e_i$, drawn from probability distributions $\delta_{\e_i}$. The threshold $\e_i$ captures the notion that an element $x \in \Omega$ is sufficiently close to $P_i$  to be labelled $L_i$. The membership of an object $x$ in a concept $L_i$ is quantified by $\mu_{L_i}(x)$, given by 

\[
\mu_{L_i}(x) = P(d(x, P_i) \leq \e_i) = \int_{d(x, P_i)}^\infty  \delta_{\e_i}(\e_i)\mathrm{d}\e_i 
\]

Labels can then be described as $L_i = <\!\!P_i, d(x, x'), \delta_{\e_i}\!\!>$. We illustrate this idea in figure \ref{fig:pnth}.

\begin{figure}[h!]
\centering
  \def\svgwidth{0.5\columnwidth}  
  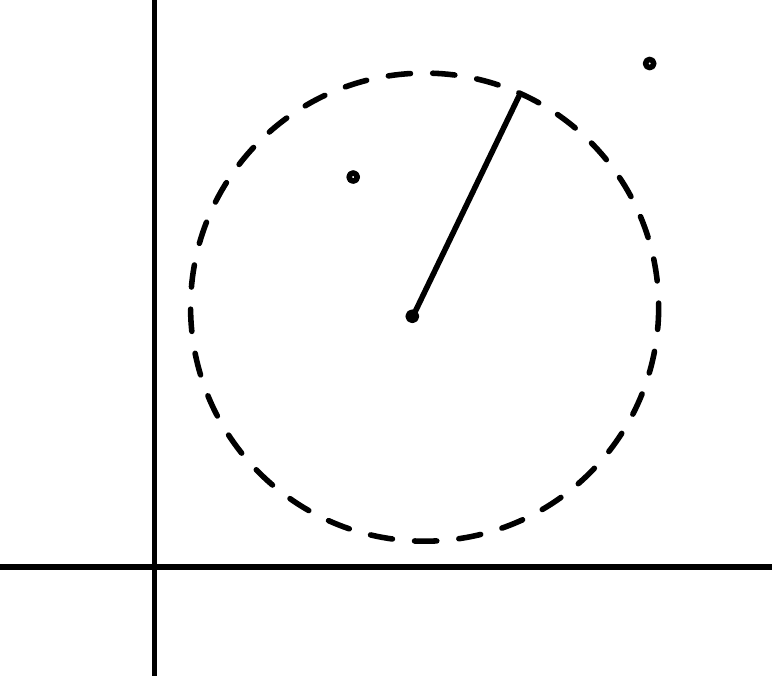  
\caption{Prototype-threshold representation of a concept $L_i$. The conceptual space has dimensions $x_1$ and $x_2$. The concept has prototype $P_i$ and threshold $\e_i$. The uncertainty about the threshold is represented by the dotted line. Element $a$ in the conceptual space is within the threshold, so it is appropriate to assert `$a$ is $L_i$'. Element $b$ is outside the threshold, so it is not appropriate to assert `$b$ is $L_i$'.}
\label{fig:pnth}
\end{figure}

\section{A HIERARCHICAL MODEL OF CONCEPT COMPOSITION}
\label{sec:cmp}
Experiments in the psychological literature  propose that human concept combination can in many cases be modelled as a weighted sum of attributes such as `has feathers', `has a beak' (for the concept `Bird') \cite{hamp1987}. These attributes differ from quality dimensions in conceptual spaces: they tend to be binary, complex, and multidimensional in themselves. We therefore view each attribute as a label in a continuous conceptual space $\Omega_i$ and combine these labels in a binary conceptual space $\{0,1\}^n$ illustrated in figure \ref{fig:binspace}. In this binary conceptual space, a conjunction of such labels ${\alpha} = \bigwedge_{i = 1}^n\pm L_i$ maps to a binary vector $\vec{y}_\alpha$ taking value $1$ for positive labels $L_i$ and $0$ for negated labels $\neg L_i$ (figure \ref{fig:binspace3}).
\begin{figure}[h!]
\small
  \centering  
  \def\svgwidth{0.75\columnwidth}  
  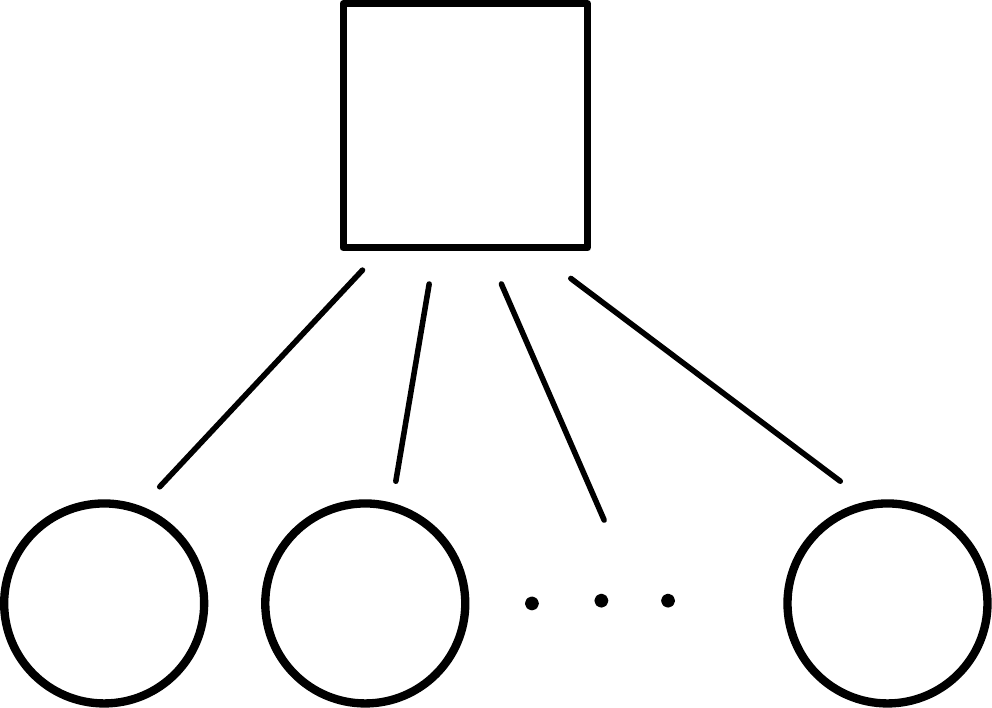  
\caption{Combining labels in a binary space}
\label{fig:binspace}
\end{figure}

\begin{figure}[h!]
\centering
  \def\svgwidth{0.5\columnwidth}  
  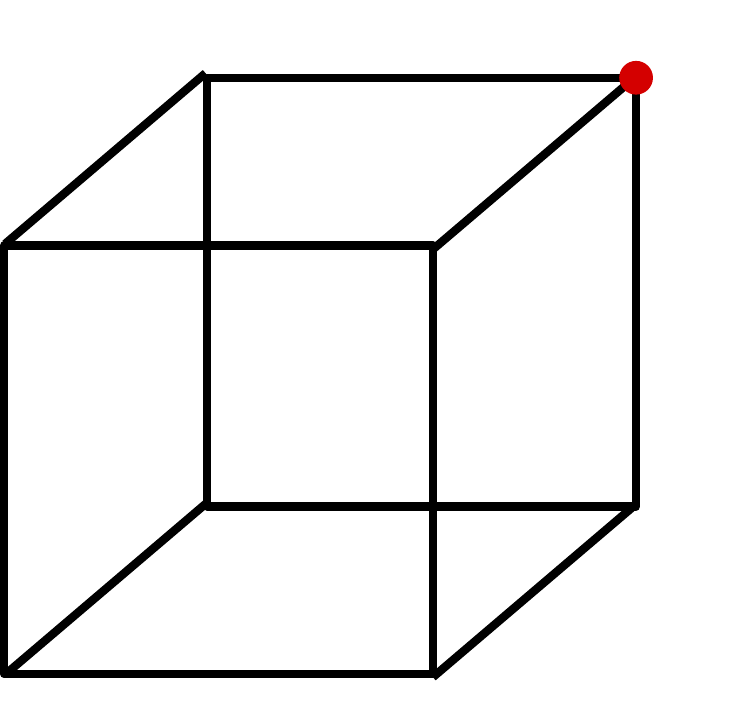  
\caption{Prototype for ${\alpha} = L_1 \land L_2 \land L_3$ and weighted dimensions in binary space.}
\label{fig:binspace3}
\end{figure}

 We treat membership in ${\alpha}$ in the binary conceptual space within the label semantics framework. So ${\alpha}$ is described in the binary space by $\tilde{\alpha}=<\!\!\vec{y}_\alpha, d(\vec{y}, \vec{y}'), \delta\!\!>$ as before, and $\mu_{\alpha}(y) = \int_{d(y, y_\alpha)}^\infty  \delta_{\e}(\e)\mathrm{d}\e$. Since the dimensions of the space are weighted, some are considered more important than others. The presence or absence of some attributes may be relaxed. For example, `Bird' might be characterised by (among others) the attributes `has feathers', `has wings', `flies'. The attributes `has wings' and `has feathers' should be given more importance than `flies'. This is because various species of birds do not fly, so this attribute may be relaxed whilst still allowing something to be categorised as a bird. The effect this has is to create elliptical regions of the conceptual space, as illustrated in figure \ref{fig:binspace4}.

\begin{figure}[h!]
\centering
  \def\svgwidth{0.5\columnwidth}  
  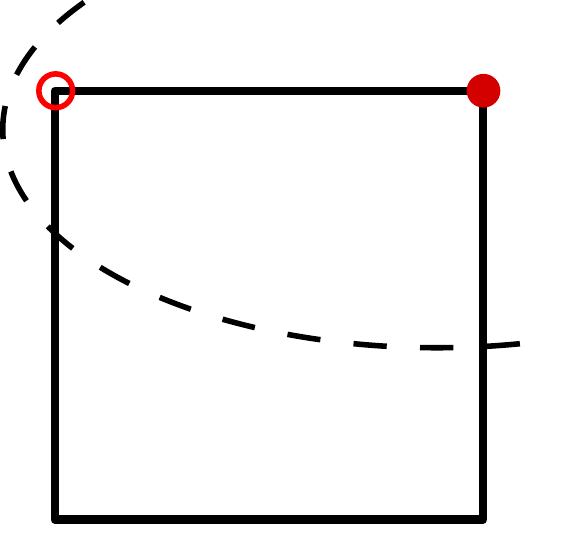  
\caption{Prototype for $\alpha = L_1 \land L_2$ and weighted dimensions in a two dimensional binary space, together with the threshold $\e$ for $\alpha$. The point $(0, 1)$, indicated by an open circle, can be considered to be an instance of the concept for which $y_\alpha$ is the prototype.}
\label{fig:binspace4}
\end{figure}

The distance metric between two vectors $\vec{y},\vec{y}^\prime$ in the binary space $\{0,1\}^n$ is written $H_{\vec{\lambda}}(\vec{y},\vec{y}^\prime)$ and defined as a weighted city block metric. 

Then, under certain constraints, membership in the combined concept $\alpha = \bigwedge_{i = 1}^n\pm L_i$ is equal to the weighted sum of membership in each of the $L_i$. This is stated in the following theorem:

\begin{thm}[Compound Concepts]
\label{thm:sumthm}
Let $L_i$ be attributes described by membership functions $\mu_{L_i}(x_i)$ in conceptual spaces $\Omega_i$. Let $\alpha$ be a conjunction of such attributes (or their negation):  $\alpha=\bigwedge_{i=1}^n \pm L_i$. If we combine these labels in a binary space $\{0,1\}^n$ with $\tilde{\alpha}=<\!\!\vec{y}_\alpha, H_{\vec{\lambda}}, U(0,\lambda_T)\!\!>$ where $\lambda_T = \sum_{i = 1}^n \lambda_i$, then we may calculate the membership in $\alpha$ in the space $\Omega_1 \times ... \times \Omega_n$ by:
\begin{gather*}
\mu_{{\alpha}}(\vec{x})=\sum_{i=1}^n \frac{\lambda_i}{\lambda_T} \mu_{\pm L_i}(x_i)
\end{gather*}
where $\vec{x} \in \Omega_1 \times ... \times \Omega_n$, $x_i \in \Omega_i$.

\end{thm}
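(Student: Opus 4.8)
The plan is to realise the membership $\mu_{\alpha}(\vec{x})$ in the product space as an expectation that factors through the binary space, and then to exploit the fact that the uniform threshold $U(0,\lambda_T)$ makes the binary-space membership an affine function of the weighted city-block distance.

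\textbf{Step 1 (lifting $\vec{x}$ into $\{0,1\}^n$).} For each coordinate $i$ I would introduce a random binary value $Z_i$ with $Z_i = 1$ iff $d(x_i,P_i)\le\e_i$, where the thresholds $\e_i\sim\delta_{\e_i}$ are mutually independent and also independent of the binary-space threshold $\e\sim U(0,\lambda_T)$. Then $P(Z_i=1)=\mu_{L_i}(x_i)$, and comparing $Z_i$ with the prototype coordinate $y_{\alpha,i}$ (which is $1$ for a positive literal and $0$ for a negated one) gives $P(Z_i=y_{\alpha,i})=\mu_{\pm L_i}(x_i)$. The appropriate reading of ``membership in $\alpha$ in the space $\Omega_1\times\dots\times\Omega_n$'' is then $\mu_{\alpha}(\vec{x}) := E\big[\mu_{\alpha}(\vec{Z})\big]$: first resolve each attribute, then measure distance to $\vec{y}_\alpha$ in the binary space. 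This two-stage construction, together with the independence of the $\e_i$, is the ``certain constraint'' the statement alludes to.

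\textbf{Step 2 (affineness of binary-space membership).} Since $\vec{Z}$ and $\vec{y}_\alpha$ are both binary, $|Z_i-y_{\alpha,i}|\in\{0,1\}$, so $H_{\vec{\lambda}}(\vec{Z},\vec{y}_\alpha)=\sum_{i=1}^n\lambda_i|Z_i-y_{\alpha,i}|\in[0,\lambda_T]$. For any $h\in[0,\lambda_T]$ the uniform threshold gives $\int_h^\infty U(0,\lambda_T)(\e)\,\mathrm{d}\e = 1-h/\lambda_T$, hence $\mu_{\alpha}(\vec{y}) = 1 - H_{\vec{\lambda}}(\vec{y},\vec{y}_\alpha)/\lambda_T$ for every binary $\vec{y}$. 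This is exactly where the choices $\delta_\e=U(0,\lambda_T)$, the weighted city-block metric, and $\lambda_T=\sum_i\lambda_i$ are needed: only then is the membership affine in the per-coordinate mismatches over the whole range of distances that can actually occur.

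\textbf{Step 3 (take expectations and simplify).} Combining the two steps, $\mu_{\alpha}(\vec{x}) = E\big[1 - H_{\vec{\lambda}}(\vec{Z},\vec{y}_\alpha)/\lambda_T\big] = 1 - \tfrac{1}{\lambda_T}\sum_{i=1}^n\lambda_i\,E|Z_i-y_{\alpha,i}| = 1 - \tfrac{1}{\lambda_T}\sum_{i=1}^n\lambda_i\big(1-\mu_{\pm L_i}(x_i)\big)$, using only linearity of expectation (no independence is needed at this stage). Since $\sum_{i=1}^n\lambda_i=\lambda_T$, the constant terms cancel and this collapses to $\sum_{i=1}^n\tfrac{\lambda_i}{\lambda_T}\mu_{\pm L_i}(x_i)$, as claimed. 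The main obstacle is conceptual rather than computational: making precise the definition of membership in the product space and the modelling assumptions (independent thresholds, the two-stage ``resolve then measure'' scheme); once those are fixed the computation is the short linearity argument above, and the only quantitative thing to verify is that $H_{\vec{\lambda}}\le\lambda_T$ always holds so that the affine formula $1-h/\lambda_T$ is valid throughout.
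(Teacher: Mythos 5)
Your argument is correct. Note that the paper itself states Theorem \ref{thm:sumthm} without proof (it is imported from the authors' earlier work on the hierarchical label semantics model), so there is no in-paper derivation to compare against; but your reconstruction is exactly the natural one for that framework: read $\mu_\alpha(\vec{x})$ as the expectation of the binary-space appropriateness over the random binary description $\vec{Z}$ of $\vec{x}$, observe that the choice $\delta = U(0,\lambda_T)$ together with $H_{\vec{\lambda}}(\vec{y},\vec{y}_\alpha)=\sum_i\lambda_i|y_i-y_{\alpha,i}|\le\lambda_T$ makes the binary membership the affine function $1-H_{\vec{\lambda}}/\lambda_T$, and then use linearity of expectation with $E|Z_i-y_{\alpha,i}|=1-\mu_{\pm L_i}(x_i)$ so that the constants cancel against $\sum_i\lambda_i/\lambda_T=1$. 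You also correctly identify the two places where hypotheses are genuinely used (the uniform threshold on $[0,\lambda_T]$ for affineness, and the two-stage ``resolve then measure'' semantics as the implicit definition of membership in the product space) and correctly note that independence of the coordinate thresholds is not needed once only marginals enter. This fills in precisely the ``certain constraints'' the paper leaves unstated.
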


We may further combine such compound concepts ${\theta}, {\varphi}$ in a higher level binary space.
Then, again under certain constraints, $ \tilde{\theta}\bullet \tilde{\varphi}$ can be expressed in the continuous space as a weighted sum of ${\theta}$ and ${\varphi}$.
\begin{thm}[Conjunction of Compound Concepts]
\label{conjthm}
 Let $\tilde{\theta} \bullet \tilde{\varphi} = <\!\!\{(1,1)\}, H_{\vec{w}}, \delta\!\!>$, where $\theta$ and $\varphi$ are compound concepts as described in theorem \ref{thm:sumthm}, so that $\theta$ is characterised by membership function $\mu_{{\theta}}(\vec{x})=\sum_{i=1}^n \frac{\lambda_{\theta_i}}{\lambda_{\theta_T}} \mu_{\pm L_i}(x_i)$ for $\vec{x} \in \Omega_1 \times ... \times \Omega_n$, $x_i \in \Omega_i$, and $\varphi$ is similarly characterised. Then 
\[ 
\mu_{\tilde{\theta}\bullet\tilde{\varphi}}(\vec{x}) = \sum_{i=1}^n (\frac{w_1 \lambda_{\varphi_T} \lambda_{\theta_i} + w_2 \lambda_{\theta_T} \lambda_{\varphi_i}}{w_T \lambda_{\theta_T} \lambda_{\varphi_T}}) \mu_{\pm L_i}(\vec{x})
\]
where $\vec{x} \in \Omega_1 \times ... \times \Omega_n$, $x_i \in \Omega_i$.
\end{thm}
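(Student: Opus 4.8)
The plan is to reduce the two--level combination to a single application of Theorem~\ref{thm:sumthm}. Observe that the higher--level object $\tilde{\theta}\bullet\tilde{\varphi} = <\!\!\{(1,1)\}, H_{\vec{w}}, \delta\!\!>$ is exactly the $n=2$ instance of the construction in Theorem~\ref{thm:sumthm}: the two ``attributes'' being combined are now $\theta$ and $\varphi$ themselves, with membership functions $\mu_{\theta},\mu_{\varphi}$ taking values in $[0,1]$, the weight vector is $\vec{w}=(w_1,w_2)$, and --- as one of the ``certain constraints'' --- the threshold distribution is $\delta = U(0,w_T)$ with $w_T = w_1 + w_2$. First I would check that the hypotheses of Theorem~\ref{thm:sumthm} are genuinely met at this level: the only property of the constituent membership functions used there is that each is a well--defined map into $[0,1]$ that can be read as the probability the corresponding label applies, and since $\mu_{\theta}$ and $\mu_{\varphi}$ are convex combinations of the $\mu_{\pm L_i}$ this holds automatically.

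Granting that, Theorem~\ref{thm:sumthm} applied at the upper level yields immediately
\[
\mu_{\tilde{\theta}\bullet\tilde{\varphi}}(\vec{x}) = \frac{w_1}{w_T}\,\mu_{\theta}(\vec{x}) + \frac{w_2}{w_T}\,\mu_{\varphi}(\vec{x}).
\]
I would then substitute the assumed forms $\mu_{\theta}(\vec{x}) = \sum_{i=1}^n \frac{\lambda_{\theta_i}}{\lambda_{\theta_T}}\mu_{\pm L_i}(x_i)$ and $\mu_{\varphi}(\vec{x}) = \sum_{i=1}^n \frac{\lambda_{\varphi_i}}{\lambda_{\varphi_T}}\mu_{\pm L_i}(x_i)$, interchange the two finite sums, and collect the coefficient of each $\mu_{\pm L_i}(x_i)$. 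Writing that coefficient over the common denominator $w_T\lambda_{\theta_T}\lambda_{\varphi_T}$ gives $\frac{w_1\lambda_{\varphi_T}\lambda_{\theta_i} + w_2\lambda_{\theta_T}\lambda_{\varphi_i}}{w_T\lambda_{\theta_T}\lambda_{\varphi_T}}$, which is precisely the claimed expression. This step is routine algebra and carries no real content.

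The main obstacle is therefore not the computation but making the reduction rigorous: spelling out in exactly what sense a compound concept $\theta$, whose membership function is a weighted sum rather than itself an integral of prototype--threshold form, is a legitimate ``attribute'' that may be fed back into the construction of Theorem~\ref{thm:sumthm}. I would handle this by noting that the proof of Theorem~\ref{thm:sumthm} factors through the reading $\mu_{L_i}(x_i) = P(\lambda_i = 1 \mid x_i)$, and that any $[0,1]$--valued function can play that role, so that a weighted sum of such probabilities is again a valid probability of ``a (randomly weighted) label applying''. One should also confirm that the constraints implicit in the statement --- uniform threshold $U(0,w_T)$, weighted city--block metric $H_{\vec{w}}$, prototype $(1,1)$ --- are exactly the instantiation with $n=2$ of those in Theorem~\ref{thm:sumthm}, so that nothing beyond the base case needs to be re--derived.
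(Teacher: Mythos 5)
Your proposal is correct: the paper states this theorem without proof (deferring to the cited hierarchical-model paper), and the intended argument is exactly your reduction --- apply Theorem~\ref{thm:sumthm} at the upper level with $n=2$, attributes $\theta,\varphi$, weights $(w_1,w_2)$ and threshold $U(0,w_T)$ to get $\mu_{\tilde{\theta}\bullet\tilde{\varphi}} = \frac{w_1}{w_T}\mu_\theta + \frac{w_2}{w_T}\mu_\varphi$, then substitute and collect coefficients over the common denominator $w_T\lambda_{\theta_T}\lambda_{\varphi_T}$. Your observation that only the marginal $[0,1]$-valued memberships of $\theta$ and $\varphi$ enter the uniform-threshold computation is the right justification for feeding compound concepts back into the construction.
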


These results show that under the constraint $\e \sim U(0,\lambda_T)$, combining labels in a weighted binary conceptual space leads naturally to the creation of compound concepts as weighted sums of individual labels, reflecting results in \cite{hamp1987}. An important aspect of these results is that non-compositional phenomena are seen, such as emergent attributes. These are attributes that become more important in the conjunction of two concepts than in either of the constituent concepts. A specific example seen in \cite{hamp1987} is that the attribute`talks' becomes more important in the conjunction `Birds that are Pets' than in either `Birds' or `Pets'. Relaxing the constraint $\e \sim U(0,\lambda_T)$ allows us to account for phenomena such as emergent attributes and overextension of concepts. In the current paper, however, we concentrate on a simple weighted sum combination and examine the properties of this type of combination in multi-agent simulations.

\section{CONVERGENCE OF DIMENSION WEIGHTS ACROSS A POPULATION}
\label{sec:expts}
We implement a simple version of the hierarchical model of concept combination in a multi-agent simulation of agents playing a series of language games, similar to those used in \cite{belp}. We investigate how agents using compound concepts $\alpha$ in a conceptual space converge to a shared weighting of the constituent concepts $L_i$. Agents do indeed converge to a shared weighting, which is dependent on the distribution of objects in the environment and also on the rate at which they move towards other agents' concepts. Section \ref{sec:methods} describes the methods and simulation set-up. Section \ref{sec:simulations} gives simulation results, and section \ref{sec:analysis} gives an analysis of the results.

\subsection{Methods}
\label{sec:methods} 
Consider agents each with labels $L_1 = <\!\!P_1, d(x,y), \delta_1\!\!> \in \Omega_1$, $L_2=<\!\!P_2, d(x,y), \delta_2\!\!> \in \Omega_2$. We assume that agents combine these two labels as in section \ref{sec:cmp} - i.e., in a binary space $\{0,1\}^2$ with weight vector $\vec{\lambda} = (\lambda_1, \lambda_2)^\top$ and where the threshold in the binary space $\e$ has distribution $\delta = U(0, \lambda_T)$, where $\lambda_T = \lambda_1 + \lambda_2$. Then, membership in the conjunction $L_1 \land L_2$ in the space $\Omega_1 \times \Omega_2$ may be calculated as a weighted sum of membership in the individual spaces. W.l.o.g. we assume that $\lambda_T = 1$. We therefore have  $\mu_{L_1 \land L_2}(\vec{x}) = \lambda \mu_{L_1}(x_1) + (1 - \lambda) \mu_{L_2}(x_2)$.

To investigate how these weights are to be determined, we run simulations in which agents with equal labels but randomly initiated weights engage in a series of dialogues about elements in the conceptual space, adjusting their weights after each dialogue is completed.

\subsubsection{Assertion algorithm}
Agents are equipped with shared labels $L_1 \in \Omega_1 = [0,1]$ and $L_2 \in \Omega_2 = [0,1]$, and a weight $\lambda$. At each timestep agents are paired into speaker and listener agents. Each pair of agents is shown an element $x \in \Omega_1 \times \Omega_2 = [0,1]^2$. The speaker agent asserts one of 
\begin{align*}
\alpha_1 &= L_1 \land L_2\\
\alpha_2 &= L_1 \land \neg L_2\\
\alpha_3 &= \neg L_1 \land L_2\\
\alpha_4 &= \neg L_1 \land \neg L_2\\
\end{align*}
where the membership in the compound concept is determined by the weighted sum of the memberships in the constituent concepts.
\begin{align*}
\mu_{\alpha_1}(x) &= \lambda \mu_{L_1}(x_1) + (1 - \lambda) \mu_{L_2}(x_2)\\
\mu_{\alpha_2}(x) &= \lambda \mu_{L_1}(x_1) + (1 - \lambda)(1 -  \mu_{L_2}(x_2))\\
\mu_{\alpha_3}(x) &= \lambda (1 - \mu_{L_1}(x_1)) + (1 - \lambda) \mu_{L_2}(x_2)\\
\mu_{\alpha_4}(x) &= \lambda (1 - \mu_{L_1}(x_1)) + (1 - \lambda)(1 -  \mu_{L_2}(x_2))\\
\end{align*}
The concept $\alpha_i$ asserted is that for which $\mu_{\alpha_i}(x)$ is maximal. Note that this implies that if $\alpha_i$ asserted then $\mu_{\alpha_i}(x) \geq 0.5$.

\subsubsection{Updating algorithm}
\label{sec:updating}
We compare two different updating algorithms. The first implements the idea that the listener agent updates its concepts when the agent's belief in the appropriateness of a compound label, $\mu_{\alpha_i}(x)$, is less than the reliability of the speaker as measured by a weight $w \in [0, 1]$. So if $\mu_{\alpha_i}(x) \leq w$ the listener agent updates its label set.

The update consists in moving the dimension weight $\lambda$ towards the value $A$ which satisfies $\mu_{\theta}(x) = w$, where  
\[
A = \frac{w - \mu_{L_2}(x_2)}{\mu_{\pm L_1}(x_1) - \mu_{\pm L_2}(x_2)}
\]
However, it is possible for $A < 0$ or $A > 1$, in which cases we set $A = 0$ or $A = 1$, giving us:
\begin{align*}
A = 
\begin{cases}
1 & \quad\text{if }\frac{w - \mu_{L_2}(x_2)}{\mu_{\pm L_1}(x_1) - \mu_{\pm L_2}(x_2)} >1\\
0 & \quad\text{if }\frac{w - \mu_{L_2}(x_2)}{\mu_{\pm L_1}(x_1) - \mu_{\pm L_2}(x_2)} <0\\
\frac{w - \mu_{L_2}(x_2)}{\mu_{\pm L_1}(x_1) - \mu_{\pm L_2}(x_2)} &\quad \text{otherwise}
\end{cases}
\end{align*}
The listener agent updates the weight $\lambda$ to $\lambda'$ by:

\[
\lambda' = \lambda + h(A - \lambda)
\]

We measure the convergence between $\lambda_i$ across the agents as the standard deviation (SD) of the $\lambda_i$. 

We also introduce a second updating algorithm. This is similar to the first with the exception that the criterion for updating is that $\mu_{\alpha_i}(\vec{x}) \neq w$. This means that if an agent with low reliability makes an assertion, the listener agent shifts the dimension weight $\lambda$ to reduce the appropriateness of the assertion $\alpha_i$.

\subsubsection{Simulation Details}
Agents have labels $L_1 = L_2 = <\!\!1, d, U[0,1]\!\!>$, where $d$ is Euclidean distance, to describe the conceptual space $\Omega = \Omega_1 \times \Omega_2 = [0,1]^2$. In this case, we have $\mu_{L_i}(x) = x_i$. Simulations are run with $10$ agents for $2,000$ timesteps with increment $h = 10^{-3}$ unless otherwise indicated.  At each timestep, each agent talks to every other agent, in a randomised order. Simulation results are averaged across 25 runs. 

Parameters varied are the distribution of elements within the conceptual space and the reliability of the agents. So, for example, one simulation might include elements sampled uniformly across the whole conceptual space, whereas another might include elements sampled uniformly from one half of the space. This difference in distribution leads to differences in the combination weights.

\subsection{Simulation Results}
\label{sec:simulations}
\subsubsection{Updating model 1: $\mu_{\alpha_i}(\vec{x}) < w$}
Within this updating model, the listener agent updates only when $\mu_{\alpha_i}(\vec{x}) < w$. We find that when the agent reliability $w > 0.5$, agents are able to converge to a shared weight $\lambda$.  The weight to which agents converge is dependent on the distribution of objects in the environment, and the reliability $w$ of the agents. When $w = 1$ for all agents, and $x_1 \sim U[0,1]$, $x_2 \sim U[0,0.5]$, so that elements are encountered within half of the total space $\Omega = [0,1]^2$, the agents converge to a value of $\lambda = 0.5$, as seen in figure \ref{fig:fig1all}.

ze

When the distribution of elements in the environment is changed, $\lambda$ may converge to a different value. For example, changing the distribution of the elements in the space to $x_1 \sim U[0.25,0.75]$, $x_2 \sim U[0,0.5]$, with $w = 1$ for all agents, results in a final value of $\lambda = 0.25$, illustrated in figure \ref{fig:fig2all}.
\begin{figure} [h!]
        \centering
        \begin{subfigure}[t]{0.65\columnwidth}
                \hspace{5pt}
                \includegraphics[width=\textwidth]{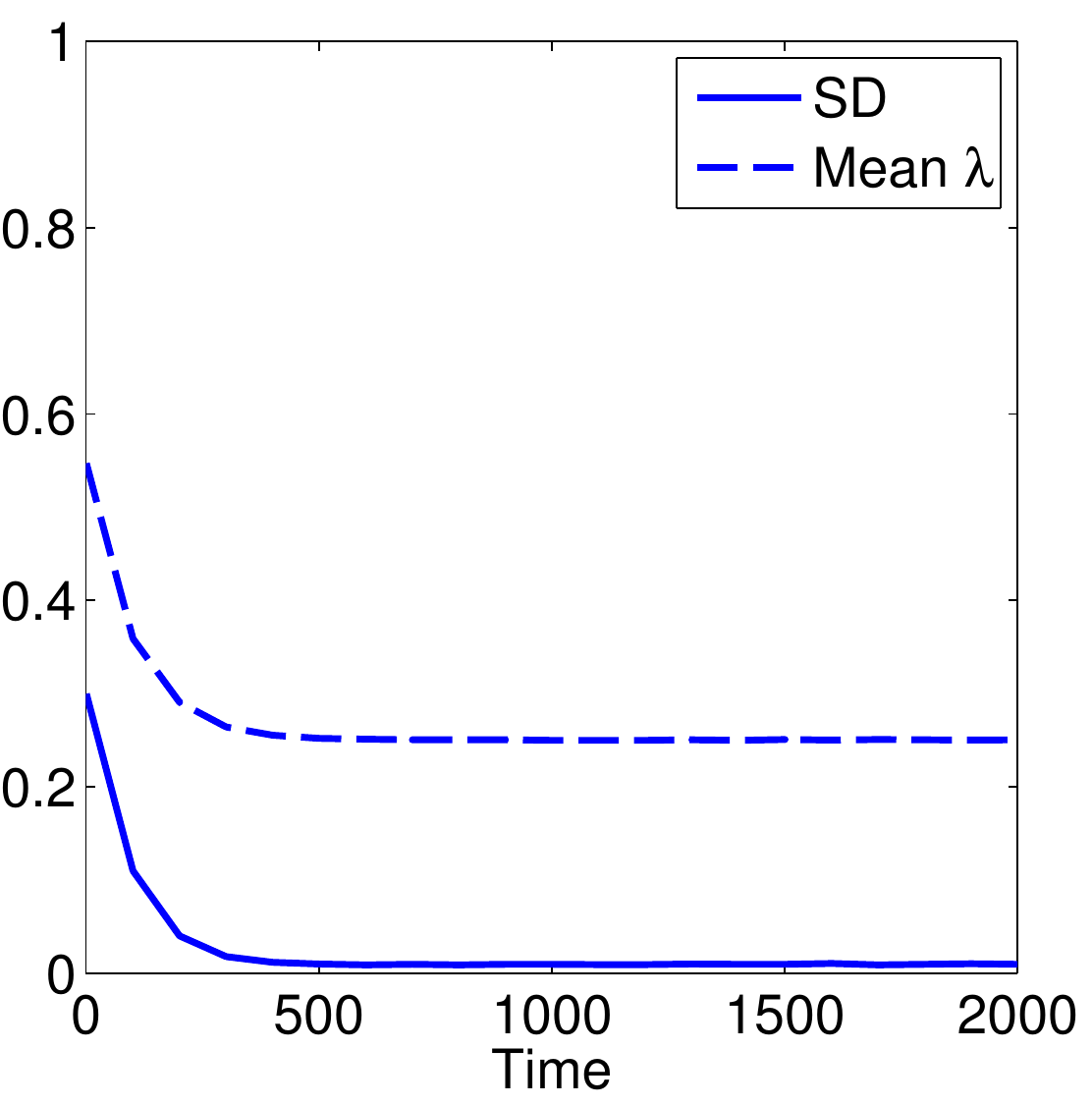}
                \caption{SD and mean $\lambda$ over time.}
                \label{fig:fig2}
        \end{subfigure}
        \begin{subfigure}[t]{0.7\columnwidth}
                \centering
                \includegraphics[width=\textwidth]{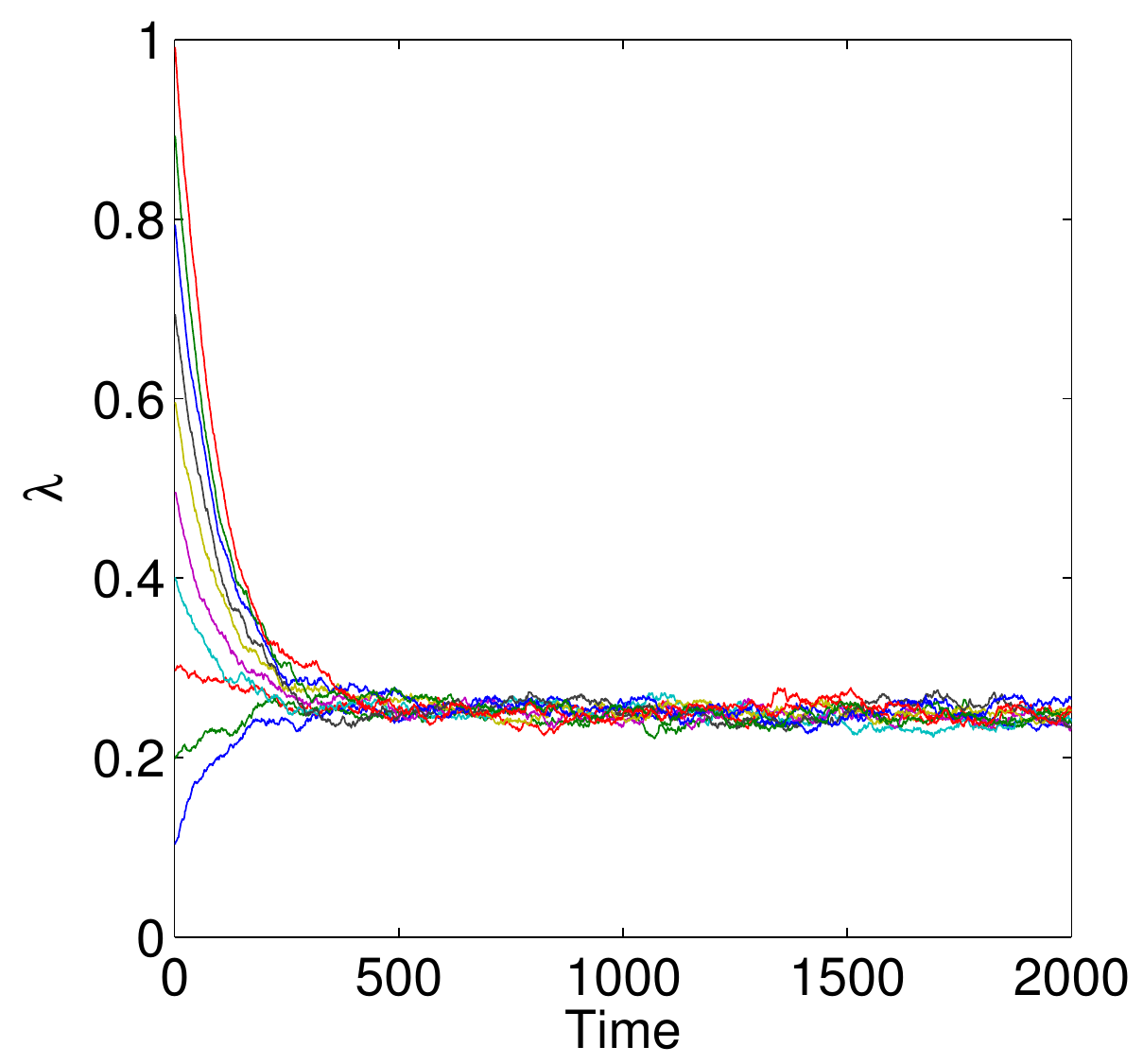}
                \caption{Behaviour of individual $\lambda_i$ over time (results from just one simulation).}
                \label{fig:mlfig2}
        \end{subfigure}
	\caption{$x_1 \sim U[0.25,0.75]$, $x_2 \sim U[0,0.5]$, $w = 1$ for all agents.}
	\label{fig:fig2all}
\end{figure}

The final value of $\lambda$ may also be dependent on the reliability of the agents as parameterised by $w$. For some distributions, the value of $w$ does not affect the value of $\lambda$ to which agents converge. In others, the value of $w$ alters the weighting $\lambda$. This is illustrated in figure \ref{fig:cmp1}.


\begin{figure} [h!]
        \centering
        \begin{subfigure}[t]{0.65\columnwidth}
                \centering
                \includegraphics[width=\textwidth]{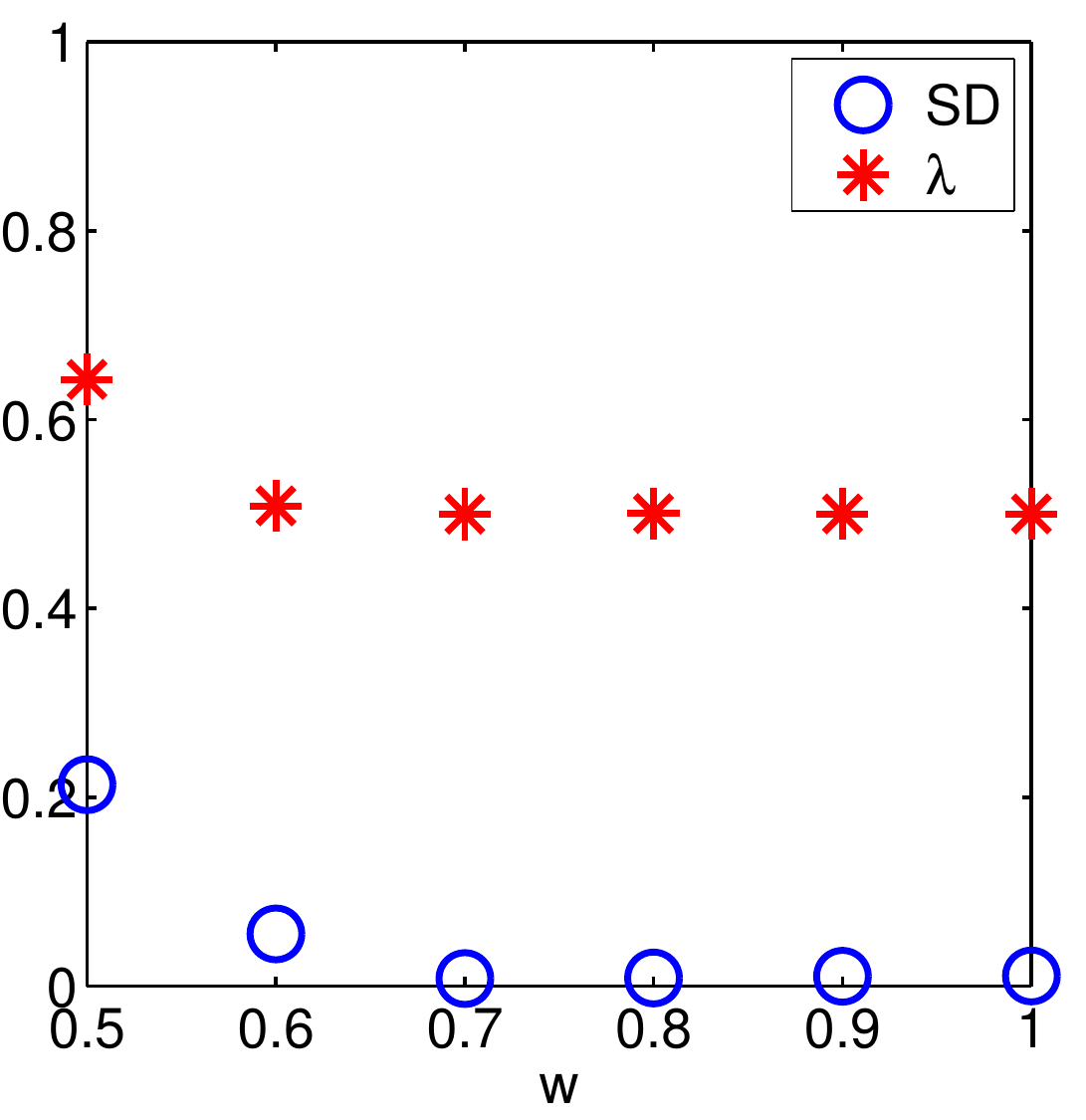}
                \caption{$x_1 \sim U[0,1]$, $x_2 \sim U[0,0.5]$. Values of $w$ vary as indicated.}
                \label{fig:cmpeqAPD1}
        \end{subfigure}
        ~ 
        \begin{subfigure}[t]{0.65\columnwidth}
                \centering
                \includegraphics[width=\textwidth]{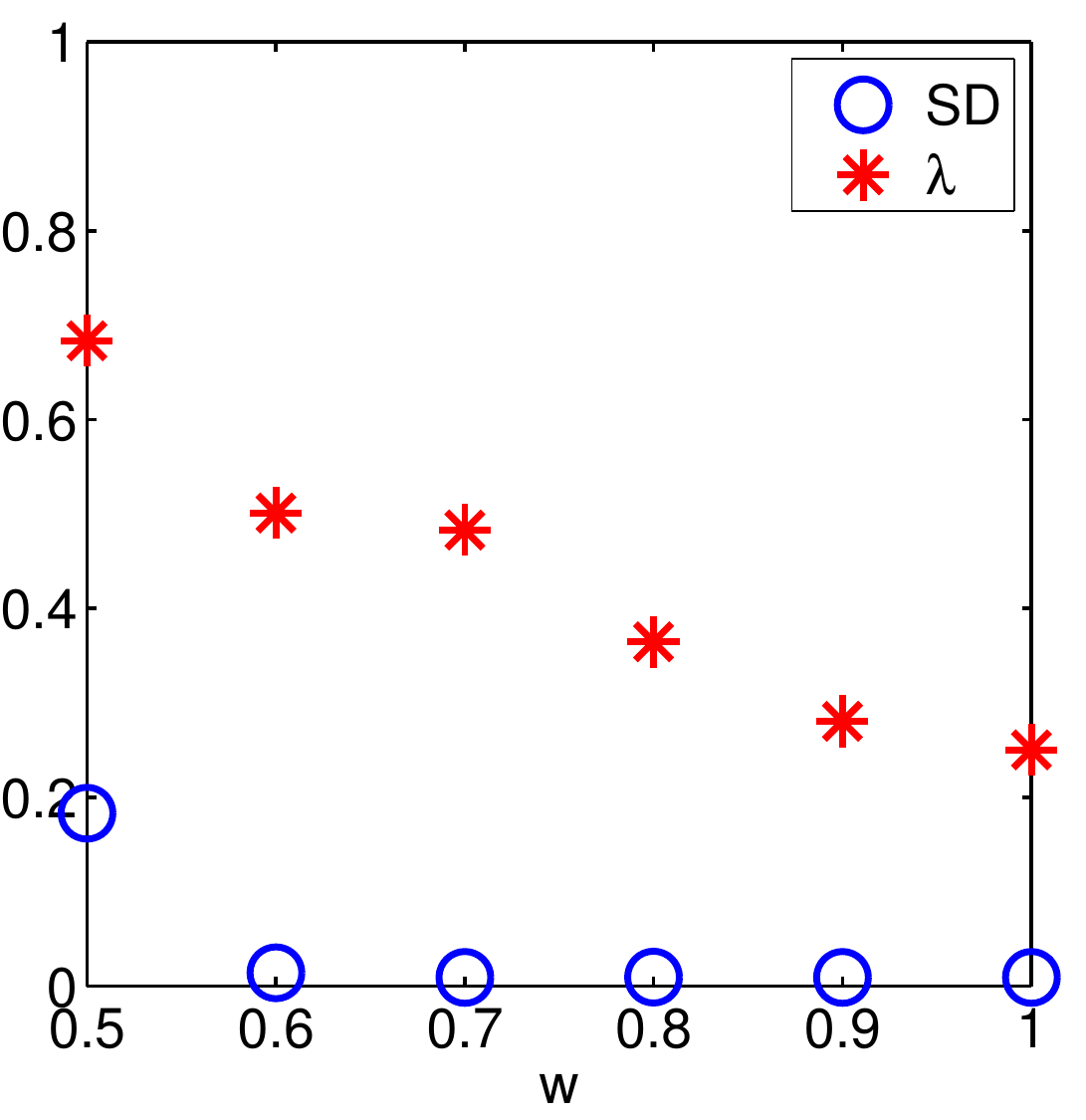}
                \caption{$x_1 \sim U[0.25,0.75]$, $x_2 \sim U[0,0.5]$. Values of $w$ vary as indicated.}
                \label{fig:cmpeqmean1}
        \end{subfigure}
	\caption{SD and mean $\lambda$ at time $t = 2000$ for different values of $w$. For $x_1 \sim U[0,1]$, $x_2 \sim U[0,0.5]$, provided that $w > 0.5$, agents converge to $\lambda = 0.5$ (LHS). In contrast, for $x_1 \sim U[0.25,0.75]$, $x_2 \sim U[0,0.5]$, agents converge to varying values depending on the value of $w$ (RHS).}
	\label{fig:cmp1}
\end{figure}

\subsubsection{Updating model 2: $\mu_{\alpha_i}(\vec{x}) \neq w$}
We also introduce a second updating model, in which the listener agent updates whenever $\mu_{\alpha_i}(\vec{x}) \neq w$. Although this model is slightly less realistic, it is more amenable to analysis, and so we use this as a starting point for the analysis of these systems. Again, each set of simulations is run 25 times and results are averaged. Figure \ref{fig:cmpeq075} compares the results of simulations run using the two different assertion models, where $x_1 \sim U[0.25,0.75]$, $x_2 \sim U[0,0.5]$.

\begin{figure} [h!]
        \centering
        \begin{subfigure}[t]{0.7\columnwidth}
                \centering
                \includegraphics[width=\textwidth]{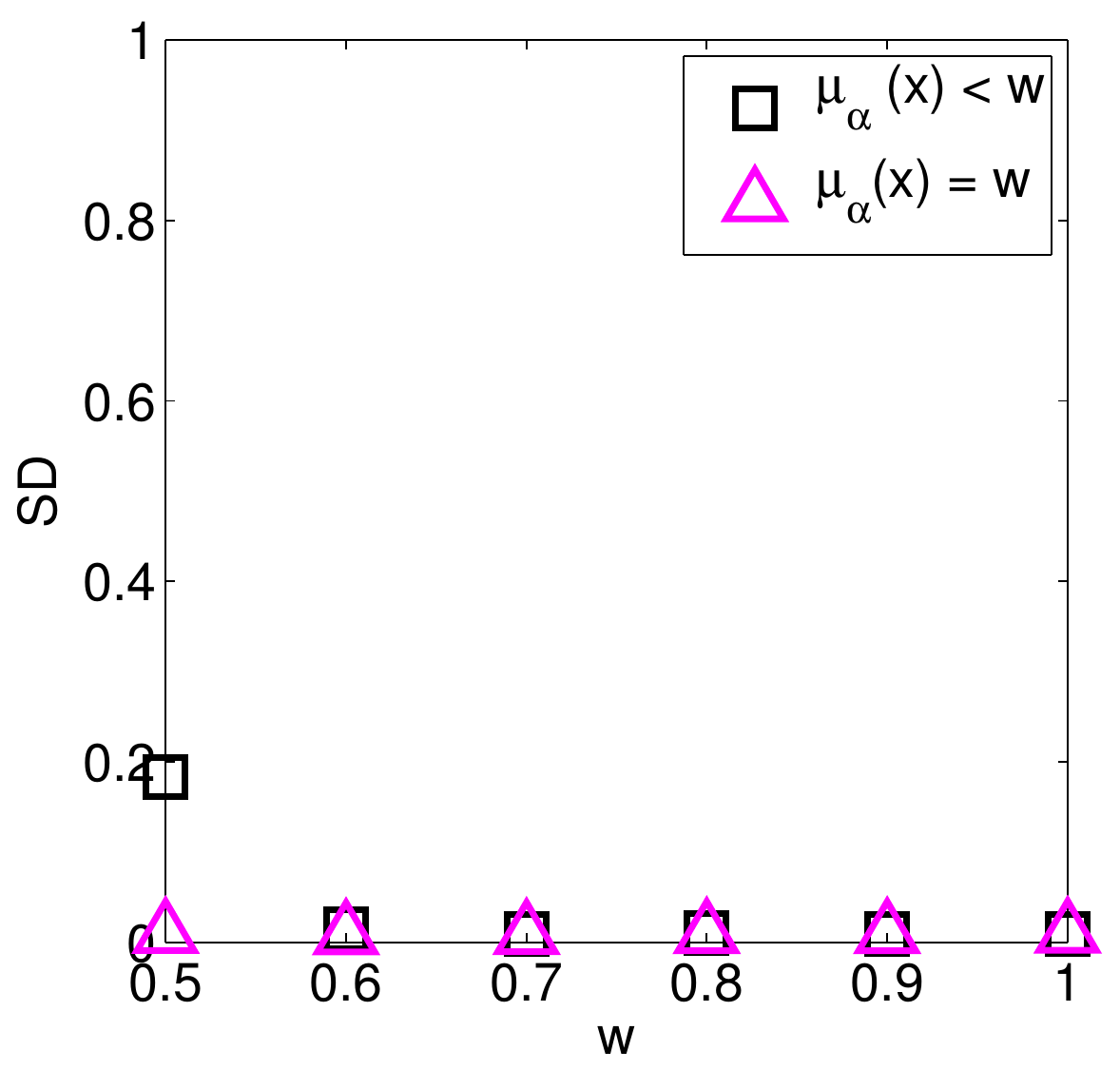}
                \caption{SD at time $t = 2000$ for different values of $w$ under the two different updating algorithms.}
                \label{fig:cmpeqAPD075}
        \end{subfigure}
        ~ 
        \begin{subfigure}[t]{0.7\columnwidth}
                \centering
                \includegraphics[width=\textwidth]{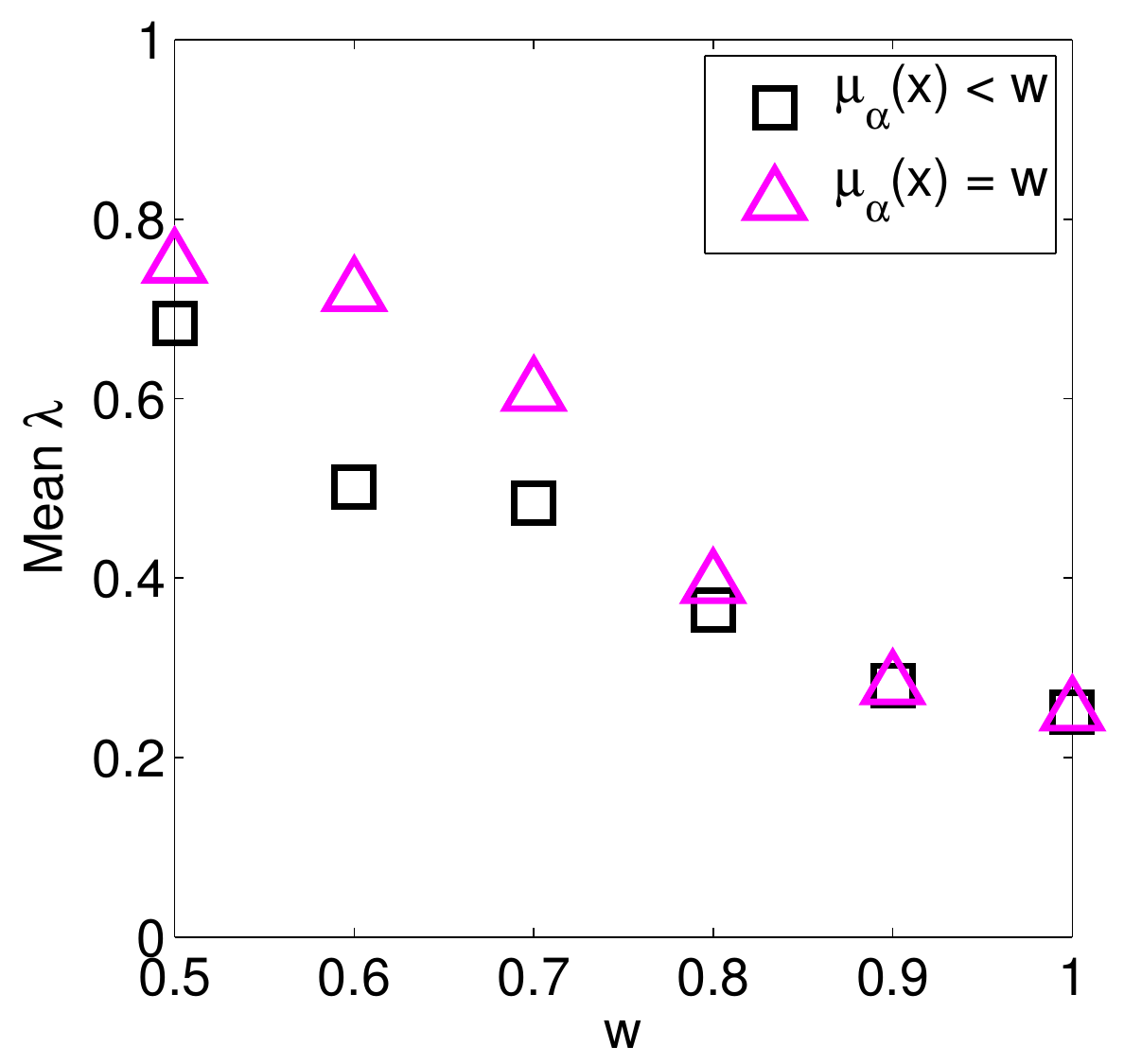}
                \caption{Mean $\lambda$ at time $t = 2000$ for different values of $w$ under the two different updating algorithms.}
                \label{fig:cmpeqmean075}
        \end{subfigure}
	\caption{$x_1 \sim U[0.25,0.75]$, $x_2 \sim U[0,0.5]$, values of $w$ vary as indicated. The population of agents converges to different values of $\lambda$ depending on the updating algorithm used.}
	\label{fig:cmpeq075}
\end{figure}

When the updating condition requires that $\mu_{\alpha_i}(\vec{x}) \neq w$, agents converge to a shared weight $\lambda$ even when their reliability $w \leq 0.5$. However, the values of $\lambda$ to which agents converge are not the same as those converged to under the first assertion algorithm.

These results show that the combination weights that the agents converge to are dependent on firstly the distribution of elements in the conceptual space, as determined by the environment, and secondly the reliability of the agents in the space. We go on to give an analysis of these results, and to prove some results relating the values of the final weights to the distribution of objects within the conceptual space.

\subsection{Analysis}
\label{sec:analysis}
We wish to analyse the results seen in section \ref{sec:simulations} in order to predict the value to which $\lambda$  will converge and also the extent to which it will converge across agents as measured by the standard deviation of the $\lambda_i$ across agents. To do so, we start with some analysis of the particular space and labels we have used before giving some more general results.

Within the results in section \ref{sec:simulations}, agents have labels $L_1 = L_2 = <1, U[0,1]>$ to describe the conceptual space $\Omega = [0,1]^2$. In this case, we have $\mu_{L_i}(x) = x_i$. Each assertion $\alpha_i$ is made exactly when each component $\mu_{\pm L_1}(x)>0.5$, $\mu_{\pm L_2}(x) >0.5$ (since otherwise another $\alpha_j$ would be maximal). We can therefore split up the conceptual space into quadrants corresponding to where each $\alpha_i$ is asserted, displayed in figure \ref{fig:quadrants}. Each quadrant where $\alpha_i$ is asserted is called $R_i$.

\begin{figure}[h!]
	 \centering
	  \def\svgwidth{0.25\textwidth}
	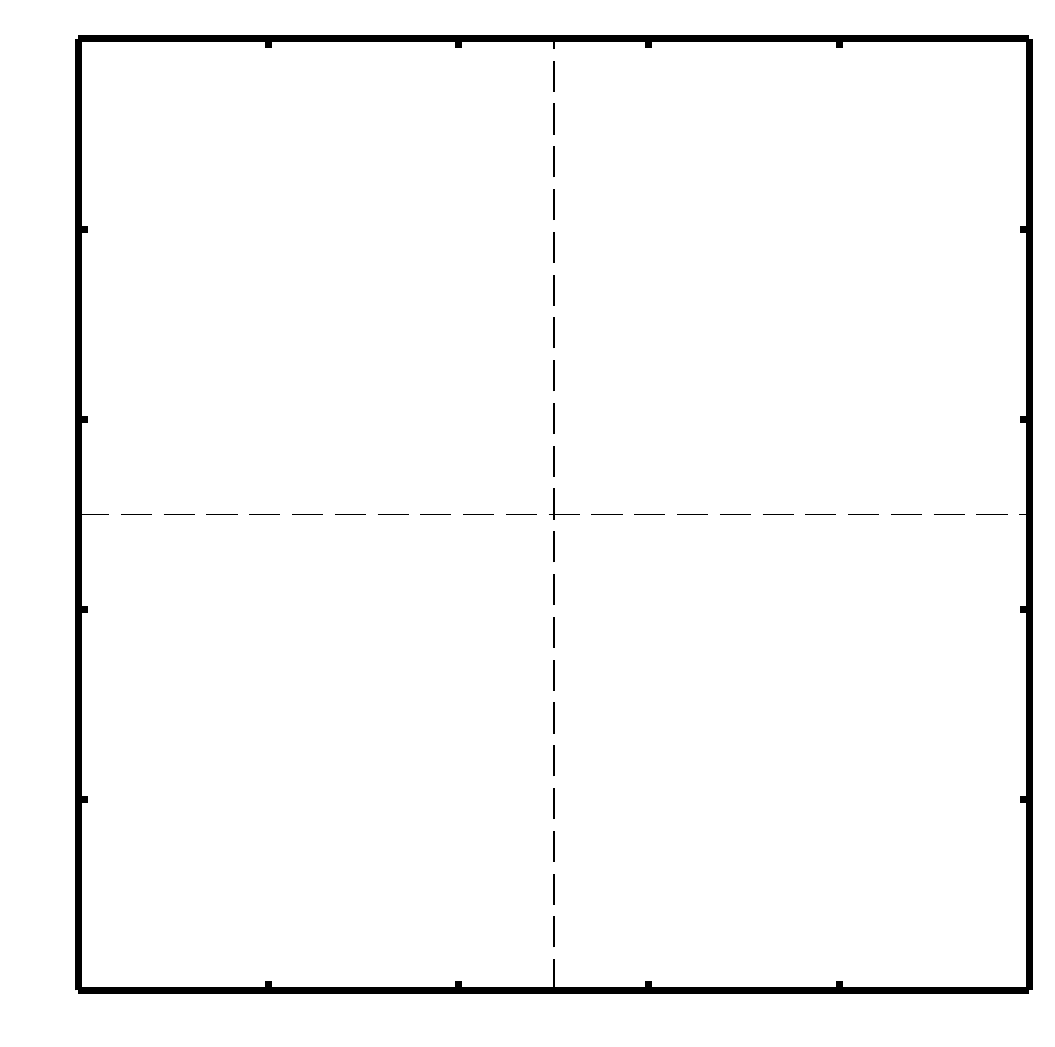
	\vspace{10pt}
	 \caption{Each assertion $\alpha_i$ is made when $\vec{x} = (x_1, x_2)$ falls in $R_i$}
	\label{fig:quadrants}
\end{figure}

To investigate the value to which the $\lambda_i$ converge, we look at the quantity $(A - \lambda)$. If $(A - \lambda)$ is positive, $\lambda$ will increase, and if it is negative, $\lambda$ will decrease. We therefore look at the circumstances that lead to $\lambda$ increasing or decreasing. We do this on a case by case basis depending on which assertion is being made.

\subsubsection*{Case: $\alpha_1$ is asserted}
When $\alpha_1$ is asserted, $\mu_{L_1}(x_1) = x_1 > 0.5$, $\mu_{L_2}(x_2) = x_2 > 0.5$. We wish to know when $A > \lambda$

\begin{align*}
\label{eq:gt}
A &= \frac{w - \mu_{L_2}(x_2)}{\mu_{L_1}(x_1) - \mu_{L_2}(x_2)} = \frac{w - x_2}{x_1 - x_2} \geq \lambda
\end{align*}

$\lambda$ is updated when $\mu_{\alpha_1} = \lambda x_1 + (1 - \lambda) x_2 < w$, i.e. when

\[
w - x_2 \geq \lambda(x_1 - x_2)
\]

Then if $(x_1 - x_2) > 0$ then $A \geq \lambda$, and the update is a positive increment. Otherwise the reverse holds, i.e. $A \leq \lambda$ and the update is a negative increment. 

\subsubsection*{Case: $\alpha_2$ is asserted}
When $\alpha_2$ is asserted, $\lambda$ is updated when $\mu_{\alpha_2} = \lambda x_1 + (1 - \lambda) (1 - x_2) < w$, i.e. when

\[
w -(1 -  x_2) \geq \lambda(x_1 + x_2 - 1)
\]

Then if $(x_1 + x_2 - 1) > 0$, $A \geq \lambda$, and the update is a positive increment. Otherwise the reverse holds, i.e. $A \leq \lambda$ and the update is a negative increment. 

\subsubsection*{Case: $\alpha_3$ is asserted}
When $\alpha_3$ is asserted, $\lambda$ is updated when $\mu_{\alpha_3}(\vec{x}) = \lambda(1 -  x_1) + (1 - \lambda) x_2 < w$, i.e. when

\[
w - x_2 \geq \lambda(1 - x_1 - x_2)
\]

Then if $(1 - x_1 - x_2) > 0$, $A \geq \lambda$, and the update is a positive increment. Otherwise the reverse holds, i.e. $A \leq \lambda$ and the update is a negative increment. 

\subsubsection*{Case: $\alpha_4$ is asserted}
When $\alpha_4$ is asserted, $\lambda$ is updated when $\mu_{\alpha_4}(\vec{x}) = \lambda(1 -  x_1) + (1 - \lambda) (1 - x_2) < w$, i.e. when

\[
w - (1 - x_2) \geq \lambda(x_2 - x_1)
\]

Then if $(x_2 - x_1) > 0$, $A \geq \lambda$, and the update is a positive increment. Otherwise the reverse holds, i.e. $A \leq \lambda$ and the update is a negative increment.  

Each of these cases can be represented graphically, since the conditions are determined by the lines $x_1 = x_2$ and $x_1 = 1 - x_2$. This is illustrated in figure \ref{fig:octants}. We call areas of the space where a positive update is made \emph{positive regions} and areas of the space where negative updates are made \emph{negative regions}.

\begin{figure}
	 \centering
	  \def\svgwidth{0.25\textwidth}
	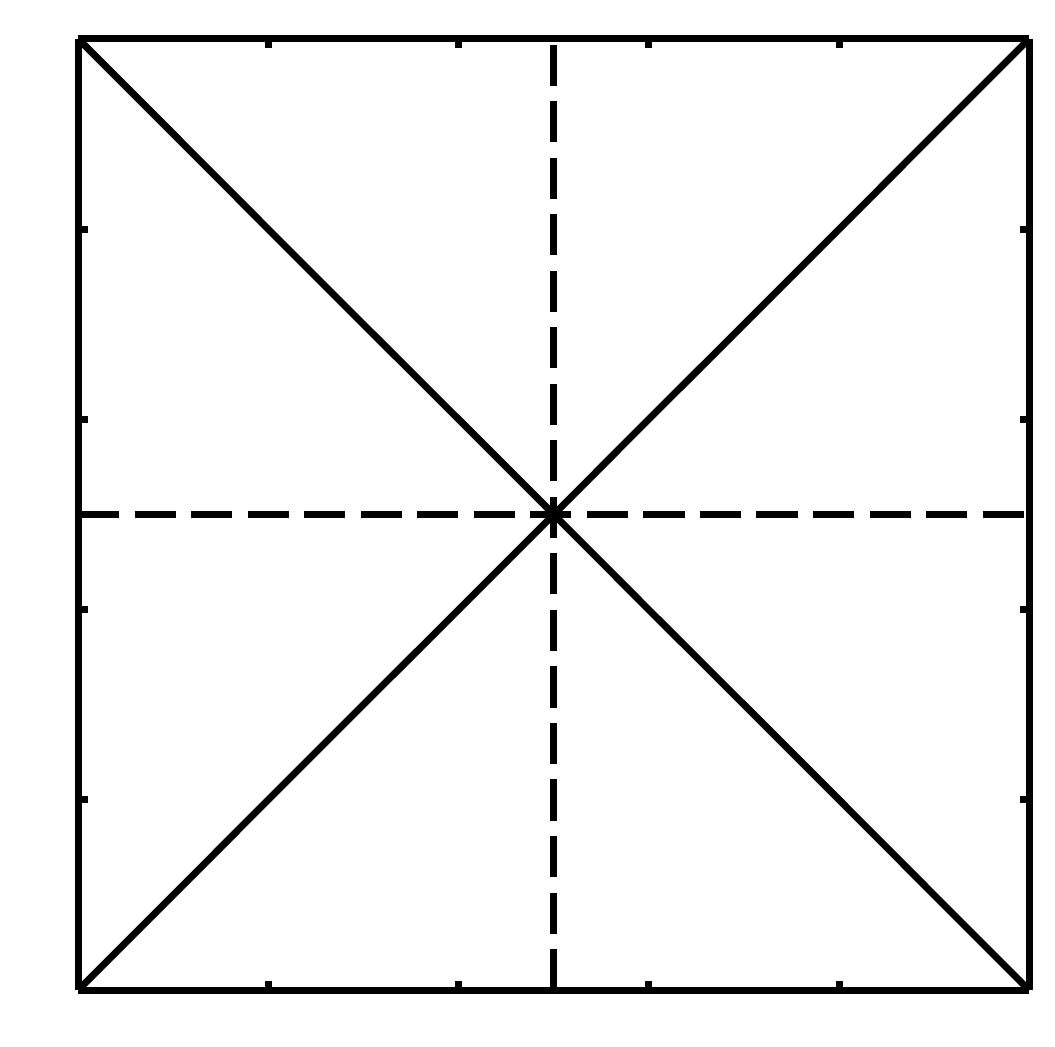
	\vspace{10pt}
	  \caption{The type of update made when $\vec{x} = (x_1, x_2)$ falls in each area of the space}
	\label{fig:octants}
\end{figure}

We can now prove some results concerning the final value of $\lambda$ across the population of agents.

\begin{thm}
Suppose that agents have labels $L_1 = L_2 = <1, U(0,1)>$ and all agents have weight 1. Agents update their concepts according to the language game described using updating model 1. Suppose that there is a probability $p^+$ of $\vec{x}$ falling in a positive region and probability $p^- = 1 - p^+$ of $\vec{x}$ falling in a negative region. Then the expected value of $\lambda$ converges to $p^+$
\end{thm}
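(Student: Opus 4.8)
The plan is to exploit the special value $w=1$, which forces the update rule to a two-state linear stochastic recursion whose mean contracts geometrically to $p^{+}$. \textbf{Step 1: the clamp always fires, and it selects the larger-membership label.} Fix an asserted conjunction $\alpha_{i}$. With $L_{1}=L_{2}=\langle 1,U(0,1)\rangle$ we have $\mu_{L_{j}}(x_{j})=x_{j}$ and $\mu_{\neg L_{j}}(x_{j})=1-x_{j}$, so for $\vec{x}$ in the interior of $[0,1]^{2}$ both $\mu_{\pm L_{1}}(x_{1})$ and $\mu_{\pm L_{2}}(x_{2})$ lie strictly in $(0,1)$. Since $\mu_{\alpha_{i}}$ evaluated at weight $A$ is the convex combination $A\,\mu_{\pm L_{1}}(x_{1})+(1-A)\mu_{\pm L_{2}}(x_{2})$, it is strictly below $1=w$ for every $A\in[0,1]$; hence the ``otherwise'' branch in the definition of $A$ is never used (outside a null set) and $A$ is clamped to $0$ or $1$. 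Reading off the four case formulae in the analysis, the clamp gives $A=1$ exactly when $\mu_{\pm L_{1}}(x_{1})>\mu_{\pm L_{2}}(x_{2})$ and $A=0$ when the reverse holds; in each case this dichotomy is precisely the positive/negative split of figure~\ref{fig:octants} (for $\alpha_{1}$ it is $x_{1}>x_{2}$ versus $x_{1}<x_{2}$, for $\alpha_{2}$ it is $x_{1}+x_{2}>1$ versus $x_{1}+x_{2}<1$, and so on). So $\vec{x}$ in a positive region $\Rightarrow A=1$, and $\vec{x}$ in a negative region $\Rightarrow A=0$.

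\textbf{Steps 2 and 3: the recursion.} The asserted $\alpha_{i}$ is determined solely by the quadrant $R_{i}$ containing $\vec{x}$ (figure~\ref{fig:quadrants}), independently of the speaker's weight, and inside each quadrant the positive/negative split is cut out by the fixed lines $x_{1}=x_{2}$ and $x_{1}=1-x_{2}$; since all agents share the same labels, $A$ depends only on $\vec{x}$. Hence the event ``$\vec{x}$ falls in a positive region'' has the fixed probability $p^{+}$, independent of the listener's current weight. By Step 1, whenever an agent listens its weight is updated (almost surely) by $\lambda'=\lambda+h(B-\lambda)=(1-h)\lambda+hB$, where $B$ takes the value $1$ with probability $p^{+}$ and $0$ with probability $p^{-}$, independently of $\lambda$. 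Taking expectations gives $E[\lambda']=(1-h)E[\lambda]+hp^{+}$, so writing $m_{k}$ for the expected weight of a fixed agent after its $k$-th update, $m_{k}-p^{+}=(1-h)(m_{k-1}-p^{+})=(1-h)^{k}(m_{0}-p^{+})$. As $0<h<1$ this tends to $0$, and since each timestep contains a fixed finite number of listening events, $E[\lambda]\to p^{+}$ as $t\to\infty$; the same holds for the population mean $\tfrac{1}{N}\sum_{i}\lambda_{i}$ by linearity.

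\textbf{Main obstacle.} The real content is Step~1: showing both that $w=1$ always forces the clamp and, more importantly, that ``clamped to $1$'' coincides case-by-case with the positive region. What follows is a routine geometric recursion. Some care is also needed over the measure-zero sets where the asserted concept or the sign of the update is ambiguous ($x_{1}$ or $x_{2}$ in $\{0,\tfrac12,1\}$, or $x_{1}=x_{2}$, or $x_{1}=1-x_{2}$), but these do not affect the expectation. Note finally that the argument pins down only the mean of the $\lambda_{i}$; convergence of their standard deviation, and the case $w\neq 1$, require a separate second-moment analysis.
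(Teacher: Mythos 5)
Your proof is correct and follows essentially the same route as the paper: show that with $w=1$ the unclamped $A$ always falls outside $[0,1]$ and is clamped to $1$ on the positive regions and $0$ on the negative regions, then take expectations of the resulting two-state linear recursion $\lambda' = (1-h)\lambda + hB$ with $B\sim\mathrm{Bernoulli}(p^+)$. Your version is marginally more careful in two places the paper glosses over — you argue the clamp abstractly from $\mu_{\alpha_i}<1$ on the interior rather than checking each quadrant's formula, and you actually solve the recursion to get $(1-h)^k(m_0-p^+)\to 0$ instead of only equating fixed points — but the substance is identical.
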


\begin{proof}
Whenever $\vec{x}$ falls in the positive region, $A \geq 1$ and hence we set $A = 1$. For example, suppose $\vec{x} \in R_1$. Then 
 \[
 A = \frac{1 - x_2}{x_1 - x_2}
 \]
Now $1 - x_2 \geq 0$ so the sign of $A$ is determined by $x_1-x_2$. If  $\vec{x}$ falls in the positive quadrant then $x_1 > x_2$ so
\[
A = \frac{1 - x_2}{x_1 - x_2} \geq 1
\]
So $A = 1$ and the update is $\lambda_{n+1} = \lambda_n + h(1 - \lambda_n)$ 
 
If  $\vec{x}$ falls in the negative quadrant then $x_1 < x_2$ so
 \[
 A = \frac{1 - x_2}{x_1 - x_2} \leq 0
 \]
So $A = 0$ and the update is $\lambda_{n+1} = \lambda_n - h\lambda_n$. Similar arguments can be made for the other quadrants, giving us the following updating rule:
\[
\lambda_{t+1} = 
\begin{cases}
	\lambda_t(1-h) &\text{if $\vec{x} \in R^-$ } \\
  	\lambda_t(1-h)  + h &\text{if $\vec{x} \in R^+$}
\end{cases}•
\]
Then consider the behaviour of the expected value of $\lambda$ over time:
\begin{align*}
E(\lambda_{t+1}) & = E(\lambda_{t+1}| \vec{x} \in R^+) p^+ + E(\lambda_{t+1}| \vec{x} \in R^-) p^-\\
& = E(\lambda_t(1 - h) + h|  \vec{x} \in R^+)p^+ \\
&\qquad+ E(\lambda_t(1 - h)|  \vec{x} \in R^-)p^- \\
& = (1-h)E(\lambda_t) + p^+h
\end{align*}
As $t\rightarrow \infty$, we have
\begin{align*}
E(\lambda) &= E(\lambda)(1-h) + p^+h\\
&=p+
\end{align*}
\end{proof}

We have therefore related the value of $\lambda$ to which agents converge to the distribution of elements $\vec{x}$ in the conceptual space $\Omega$.

\begin{thm}
\label{thm:E}
Suppose agents are equipped with labels $L_1$, $L_2$ and combine and update these label according to the language game described. Then the expected value of $\lambda$, $E(\lambda) = E(A)$.
\end{thm}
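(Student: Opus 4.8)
The plan is to push expectations straight through the update recursion and read off $E(\lambda)$ as the fixed point of the resulting linear recurrence, exactly mirroring the argument in the proof of the previous theorem but without specialising the labels or the reliability. First I would rewrite the listener's update $\lambda' = \lambda + h(A-\lambda)$ as the convex combination $\lambda_{t+1} = (1-h)\lambda_t + h A_{t+1}$, where $A_{t+1}$ is the clipped target value computed at step $t+1$ from the presented element $\vec{x}$, the assertion $\alpha_i$ made by the speaker, and the reliability $w$. To treat both updating models uniformly, I would extend the definition of $A_{t+1}$ by declaring $A_{t+1} = \lambda_t$ on any step at which the listener does not update; since in that case $\lambda_{t+1} = \lambda_t$, this is precisely the $A_{t+1} = \lambda_t$ instance of the same formula, so the recursion $\lambda_{t+1} = (1-h)\lambda_t + h A_{t+1}$ holds at every timestep regardless of which model is in force.

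Next I would take expectations over all the randomness in a step — the speaker/listener pairing, the sampled $\vec{x}$, and the current weights — to obtain $E(\lambda_{t+1}) = (1-h)E(\lambda_t) + h E(A_{t+1})$. This is the direct generalisation of the recursion $E(\lambda_{t+1}) = (1-h)E(\lambda_t) + p^+ h$ used before, where $E(A_{t+1}) = p^+$ held only because, for $L_1=L_2=\langle 1,U(0,1)\rangle$ and $w=1$, $A$ took just the clipped values $0$ and $1$.

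Finally, passing to the steady state as $t\to\infty$ — assuming, as in the previous theorem, that the distribution of $\lambda_t$, and hence $E(\lambda_t)$ and $E(A_{t+1})$, converges — the recurrence becomes the self-consistent identity $E(\lambda) = (1-h)E(\lambda) + h E(A)$. Cancelling and using $h\neq 0$ yields $E(\lambda) = E(A)$, as claimed.

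The main obstacle I anticipate is not the algebra but the bookkeeping around the non-updating steps and the convergence hypothesis. In particular, $A_{t+1}$ depends on $\lambda_t$ — both through the clipping to $[0,1]$ and through which $\alpha_i$ the speaker asserts — so the distribution of $A_{t+1}$ co-evolves with that of $\lambda_t$, and one must argue that $E(A_{t+1})$ genuinely settles to a limit $E(A)$ rather than merely invoking it. I would handle this the same way the preceding theorem does, presenting $E(\lambda)=E(A)$ as the steady-state identity and noting that a fully rigorous version would require establishing the existence of a stationary distribution for the Markov chain $(\lambda_t)$ induced by the language game.
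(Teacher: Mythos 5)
Your proof is correct and follows essentially the same route as the paper's: rewrite the update as $\lambda_{t+1} = (1-h)\lambda_t + hA$, take expectations to get the linear recurrence $E(\lambda_{t+1}) = (1-h)E(\lambda_t) + hE(A)$, and solve the steady-state fixed point. Your extra bookkeeping around non-updating steps and the co-evolution of $A$ with $\lambda_t$ only makes explicit assumptions the paper leaves implicit.
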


\begin{proof}
To obtain $E(\lambda)$, consider:
\[
\lambda_{t+1} = \lambda_t(1 - h) + hA
\]
\begin{align*}
E(\lambda_{t+1}) & = E(\lambda_t(1 - h) + hA)\\
\label{eq:exp}
& = (1-h)E(\lambda_t) + hE(A)
\end{align*}
Then as $t\rightarrow \infty$, we have
\begin{align*}
E(\lambda) &= E(\lambda)(1-h) + hE(A)\\
&=E(A)
\end{align*}
\end{proof}

This result shows that the weighting given to the compound concepts $\alpha_i$ may be directly predicted from the distribution of elements in the conceptual space and the membership functions used to categorise the constituent labels $L_j$.  

If we consider updating model 2, where agents update whenever $\mu_{\alpha_i}(\vec{x}) \neq w$, we may also obtain an expression for the variance of the $\lambda_i$ across agents.

\begin{thm}
Suppose agents have labels $L_1$, $L_2$ and that agents update according to updating model 2: $\mu_{\alpha_i}(\vec{x}) \neq w$. Then the variance of the $\lambda$ across agents is $Var(\lambda) = \frac{h}{2-h}Var(A)$	
\end{thm}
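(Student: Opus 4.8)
The plan is to exploit the fact that, under updating model 2, the listener updates at \emph{every} timestep: the exceptional event $\mu_{\alpha_i}(\vec x) = w$ has probability zero when $\vec x$ is drawn from a continuous distribution, so the recursion $\lambda_{t+1} = \lambda_t + h(A - \lambda_t) = (1-h)\lambda_t + hA$ holds unconditionally, with $A = A(\vec x)$ (clamped to $[0,1]$) the target value associated with the element presented at that timestep. Treating the successive values $A_1, A_2, \dots$ as independent draws from the distribution of $A$ induced by the environment, and independent across agents since distinct pairs see distinct elements, each agent's weight then follows an autoregressive process of order one.

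First I would take variances on both sides of $\lambda_{t+1} = (1-h)\lambda_t + hA_t$. Since $A_t$ is the fresh environment draw at step $t$ whereas $\lambda_t$ is a function of the earlier draws $A_1,\dots,A_{t-1}$, the two are independent and the cross term vanishes, giving
\begin{gather*}
\mathrm{Var}(\lambda_{t+1}) = (1-h)^2\,\mathrm{Var}(\lambda_t) + h^2\,\mathrm{Var}(A).
\end{gather*}
Next I would pass to the limit $t\to\infty$. Because $|1-h|<1$, the variance recursion is a contraction with a unique fixed point, so writing $\mathrm{Var}(\lambda)$ for the limiting variance and solving
\begin{gather*}
\mathrm{Var}(\lambda) = (1-h)^2\,\mathrm{Var}(\lambda) + h^2\,\mathrm{Var}(A)
\end{gather*}
yields $\bigl(1-(1-h)^2\bigr)\mathrm{Var}(\lambda) = h^2\,\mathrm{Var}(A)$, i.e. $\mathrm{Var}(\lambda) = \dfrac{h^2}{2h-h^2}\,\mathrm{Var}(A) = \dfrac{h}{2-h}\,\mathrm{Var}(A)$, as claimed.

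The remaining point is to read $\mathrm{Var}(\lambda)$ as the variance \emph{across agents} rather than of a single agent over time. Since all agents run independent copies of the same AR(1) recursion driven by i.i.d.\ environment draws, at a fixed large time the empirical distribution of the population's weights approximates the common stationary law, whose variance is exactly the quantity just computed; this is the sense in which the cross-agent standard deviation converges to $\sqrt{h/(2-h)}\,\mathrm{SD}(A)$.

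The main obstacle is justifying that the $A_t$ may be treated as i.i.d.\ and independent of $\lambda_t$: the asserted conjunction $\alpha_i$, hence the signs $\pm L_1,\pm L_2$ entering the formula for $A$, depends on the \emph{speaker}'s current weight, and the clamping of $A$ to $[0,1]$ makes $A$ a genuinely nonlinear function of both $\vec x$ and that weight. In a homogeneous population near steady state all weights are close, so this dependence is weak and the i.i.d.\ idealisation is the natural one to analyse; a fully rigorous argument would need to control the correlations it introduces, and to absorb the several within-timestep updates (one per partner) into a redefinition of the timestep.
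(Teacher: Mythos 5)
Your derivation is correct and follows essentially the same route as the paper: obtain the variance recursion $\mathrm{Var}(\lambda_{t+1}) = (1-h)^2\mathrm{Var}(\lambda_t) + h^2\mathrm{Var}(A)$ from the AR(1) update $\lambda_{t+1}=(1-h)\lambda_t+hA$ and solve for the fixed point (the paper expands $E(\lambda_{t+1}^2)-(E(\lambda_{t+1}))^2$ and, via a sign typo, writes $-h^2\mathrm{Var}(A)$ before arriving at the same answer, and it uses the independence of $\lambda_t$ and $A$ implicitly where you flag it explicitly). Your closing remarks on the i.i.d.\ idealisation and the across-agents interpretation are caveats the paper glosses over, not a departure from its argument.
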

\begin{proof}
We obtain $Var(\lambda)$ by firstly calculating $E(\lambda^2)$:
\begin{align*}
E(\lambda_{t+1}^2) & = E((\lambda_t(1 - h) + hA)^2)\\
&=E((1-h)^2\lambda_t^2 +2h(1-h)\lambda_tA +h^2A^2))\\
& = (1-h)^2E(\lambda_t^2) +2h(1-h)E(\lambda_t)E(A) + h^2E(A^2)\\
\end{align*}
We  may then calculate
\begin {align*}
Var(\lambda_{t+1}) &= E(\lambda_{t+1}^2) - (E(\lambda_{t+1}))^2 \\
&= (1-h)^2E(\lambda_t^2) +2h(1-h)E(\lambda_t)E(A)\\
&\qquad + h^2E(A^2) - (E(\lambda)(1-h) + hE(A))^2\\
&=  (1-h)^2E(\lambda_t^2) + 2h(1-h)E(\lambda_t)E(A)\\
& \qquad + h^2E(A^2) - (1-h)^2(E(\lambda_t))^2\\
& \qquad - 2h(1-h)E(\lambda_t)E(A) - h^2 (E(A))^2\\
&=(1-h)^2Var(\lambda_t) - h^2Var(A)
\end{align*}
As $t \rightarrow \infty$, we have
\begin{align*}
Var(\lambda) &= (1-h)^2Var(\lambda) - h^2Var(A)\\
&= \frac{h}{2 - h} Var(A)
\end{align*}
\end{proof}
We can further examine how quickly the mean and variance of $\lambda_i$ approach that of $A$. We obtain an expression for $E(\lambda_t)$ and $Var(\lambda_t)$ in terms of t and solve. The speed at which the $\lambda_i$ converge to the resting state is dependent on $h$. 

\begin{thm}
Suppose agents are equipped with labels $L_1$, $L_2$ and combine and update these label according to the language game described, using updating model 2. Then the number of timesteps until $ |E(\lambda_t) - E(A)| \leq \epsilon$ for some small $\epsilon$ is $t \geq (\log(\epsilon) - \log(|E(\lambda_0) - E(A)|))/\log(1-h)$, and the number of timesteps until $|Var(\lambda_t) - \frac{h}{2-h}Var(A)| \leq \epsilon$ is 
\[t \geq \frac{(\log(\epsilon) - \log(|Var(\lambda_0) - \frac{h}{2-h}Var(A)|))}{2\log(1-h)}\]
\end{thm}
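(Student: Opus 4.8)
The plan is to solve explicitly the two affine recurrences for $E(\lambda_t)$ and $Var(\lambda_t)$ that were already derived in the proofs of Theorem~\ref{thm:E} and of the preceding theorem on $Var(\lambda)$, and then to invert the resulting closed forms to read off how many timesteps are needed to come within $\epsilon$ of the limiting value.

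First I would recall that, under updating model 2, the update $\lambda_{t+1} = (1-h)\lambda_t + hA$ gives, on taking expectations, $E(\lambda_{t+1}) = (1-h)E(\lambda_t) + hE(A)$ --- an affine recurrence $u_{t+1} = r u_t + c$ with ratio $r = 1-h$ and unique fixed point $u^\star = c/(1-r) = E(A)$. Subtracting the fixed point yields $E(\lambda_{t+1}) - E(A) = (1-h)\bigl(E(\lambda_t) - E(A)\bigr)$, so by induction $E(\lambda_t) - E(A) = (1-h)^t\bigl(E(\lambda_0) - E(A)\bigr)$, where $E(\lambda_0)$ is the mean of the initial (random) weights across the agents. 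Since $0 < h < 1$ we have $0 < 1-h < 1$, hence $(1-h)^t > 0$ and $\bigl|E(\lambda_t) - E(A)\bigr| = (1-h)^t\,\bigl|E(\lambda_0) - E(A)\bigr|$.

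Next I would impose the target $\bigl|E(\lambda_t) - E(A)\bigr| \le \epsilon$, i.e. $(1-h)^t \le \epsilon/\bigl|E(\lambda_0) - E(A)\bigr|$, and take logarithms to obtain $t\log(1-h) \le \log\epsilon - \log\bigl|E(\lambda_0) - E(A)\bigr|$. The one point to watch is that $\log(1-h) < 0$, so dividing through reverses the inequality and gives exactly $t \ge \bigl(\log\epsilon - \log|E(\lambda_0) - E(A)|\bigr)/\log(1-h)$, as claimed.

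The variance statement follows by running the identical argument on the recurrence for $Var(\lambda_t)$ from the preceding theorem, which is again affine but with ratio $r = (1-h)^2$ and fixed point $\tfrac{h}{2-h}Var(A)$ (the value of the constant term is irrelevant for the rate); iterating gives $\bigl|Var(\lambda_t) - \tfrac{h}{2-h}Var(A)\bigr| = (1-h)^{2t}\,\bigl|Var(\lambda_0) - \tfrac{h}{2-h}Var(A)\bigr|$, and then imposing $\le \epsilon$, taking logs, and dividing by $\log\bigl((1-h)^2\bigr) = 2\log(1-h) < 0$ (again flipping the inequality) produces the second displayed bound. I do not expect a genuine obstacle here --- the argument is just ``solve a linear recurrence, then take a logarithm.'' The only mild subtleties are keeping track of the sign of $\log(1-h)$ when dividing, reading $\lambda_0$ as the initial distribution of weights so that $E(\lambda_0)$ and $Var(\lambda_0)$ are its mean and variance, and the tacit assumption $0 < h < 1$ which makes the geometric factor a genuine contraction.
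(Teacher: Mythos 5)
Your proposal is correct and follows essentially the same route as the paper: solve the affine recurrences for $E(\lambda_t)$ and $Var(\lambda_t)$ in closed form, then take logarithms and divide by $\log(1-h)$ (respectively $2\log(1-h)$). The only cosmetic differences are that you subtract the fixed point and iterate the contraction directly where the paper unrolls the geometric sum, and that you are more explicit than the paper about the inequality reversing when dividing by the negative quantity $\log(1-h)$.
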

\begin {proof}

We firstly obtain an expression for $E(\lambda_t)$ in terms of $t$, $h$, $E(A)$ and $E(\lambda_0)$

\begin{align*}
E(\lambda_{t}) & = E(\lambda_{t-1})(1 - h) + hE(A)\\
& = E(\lambda_0)(1-h)^t + E(A)\sum_{k = 1}^t(h(1-h)^{t-1})\\
&= E(\lambda_0)(1-h)^t + E(A)(1 - (1-h)^t)\\
\end{align*}

Now, consider

\begin{align*}
\epsilon &\geq |E(\lambda_t) - E(A)|\\
& = |E(\lambda_0) -E(A)|(1-h)^t\\
t &\geq (\log(\epsilon) - \log(|E(\lambda_0) - E(A)|))/\log(1-h)
\end{align*}

To calculate the number of timesteps needed until $Var(\lambda)$ has reached its resting state, we again obtain an expression for $Var(\lambda_t)$ in terms of $t$, $h$, $Var(A)$ and $Var(\lambda_0)$.
\begin{align*}
Var{\lambda_t} &= (1-h)^2Var(\lambda_(t-1)) - h^2Var(A)\\
&= Var(\lambda_0)(1-h)^{2t} + Var(A)\sum_{k = 1}^t h^2(1-h)^{2t}\\
& = Var(\lambda_0)(1-h)^{2t} + \frac{h}{2-h}Var(A)(1 - (1-h)^{2t})
\end{align*}

Again, consider 
\begin{align*}
\epsilon &\geq |Var(\lambda_t) - \frac{h}{2-h}Var(A)|\\
& = |Var(\lambda_0) - \frac{h}{2-h}Var(A)|(1-h)^{2t}\\
t &\geq \frac{(\log(\epsilon) - \log(|Var(\lambda_0) - \frac{h}{2-h}Var(A)|))}{2\log(1-h)}
\end{align*}

\end{proof}

To illustrate these results, we ran simulations of the language game with 1000 agents each with $L_1 = L_2 = <\!\!1, U(0,1)\!\!>$, $w = 1$, $x_1 \sim U[0.25,0.75]$, $x_2 \sim U[0,0.5]$, and $h \in \{10^{-2}, 10^{-3}, 10^{-4}, 10^{-5}\}$. Figure $\ref{fig:limE}$ shows the values of $E(\lambda_t)$ over time obtained from simulations, together with the predicted value of $E(\lambda_t)$ and also the resting state $E(\lambda)$. 

\begin{figure} [htbp]
                \centering
                \includegraphics[width=\columnwidth]{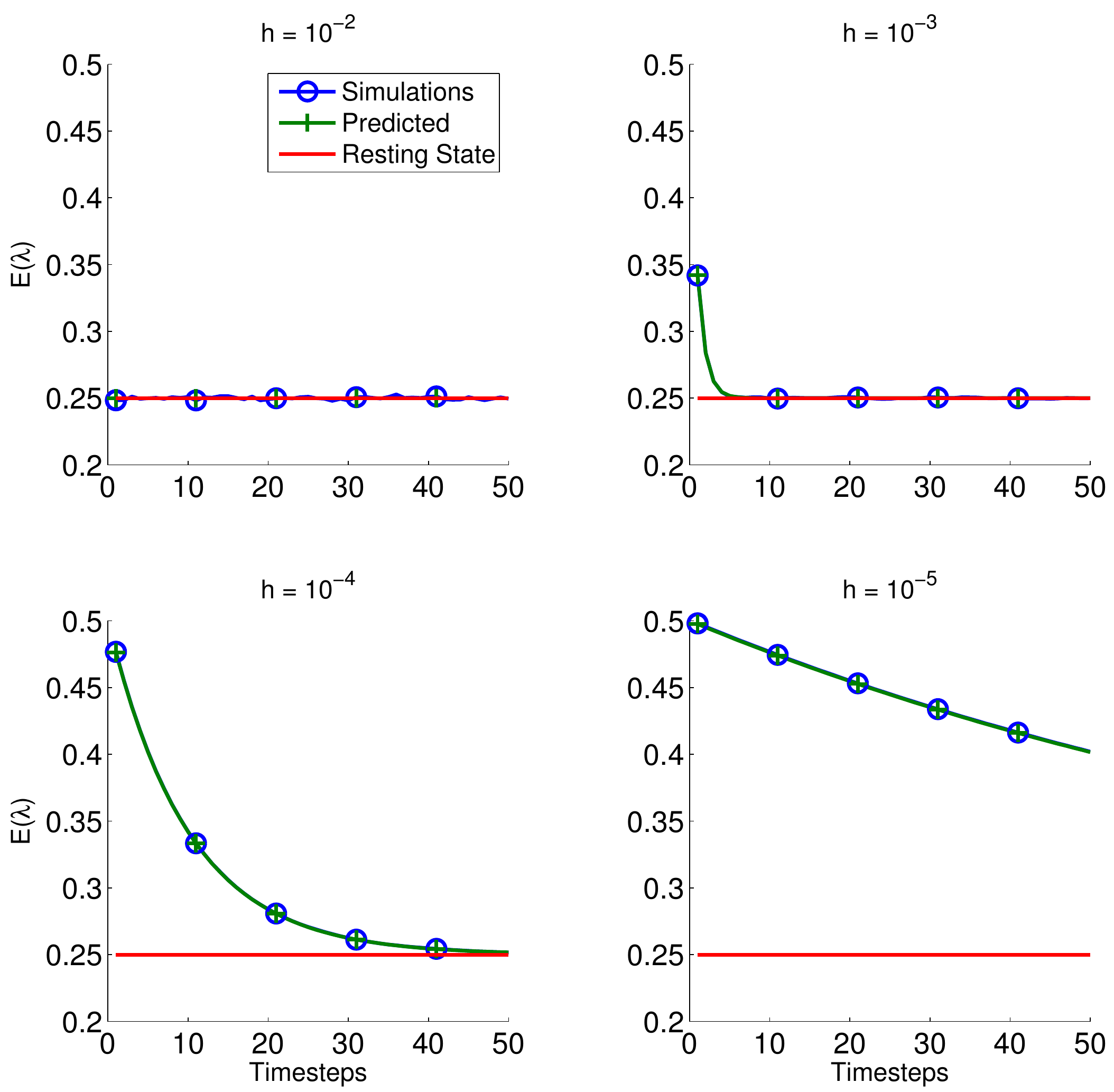}
                \caption{Predicted value of $E(\lambda)$ and actual value of $E(\lambda)$ over time for different values of $h$. The rate at which $E(\lambda)$ approaches its resting state is slower for smaller $h$.}
                \label{fig:limE}
        \end{figure}

 Figure $\ref{fig:limVar}$ shows the values of $Var(\lambda_t)$ over time obtained from simulations, together with the predicted value of $Var(\lambda_t)$ and also the resting state $Var(\lambda)$.
 
\begin{figure}[t]
                \centering
                \includegraphics[width=\columnwidth]{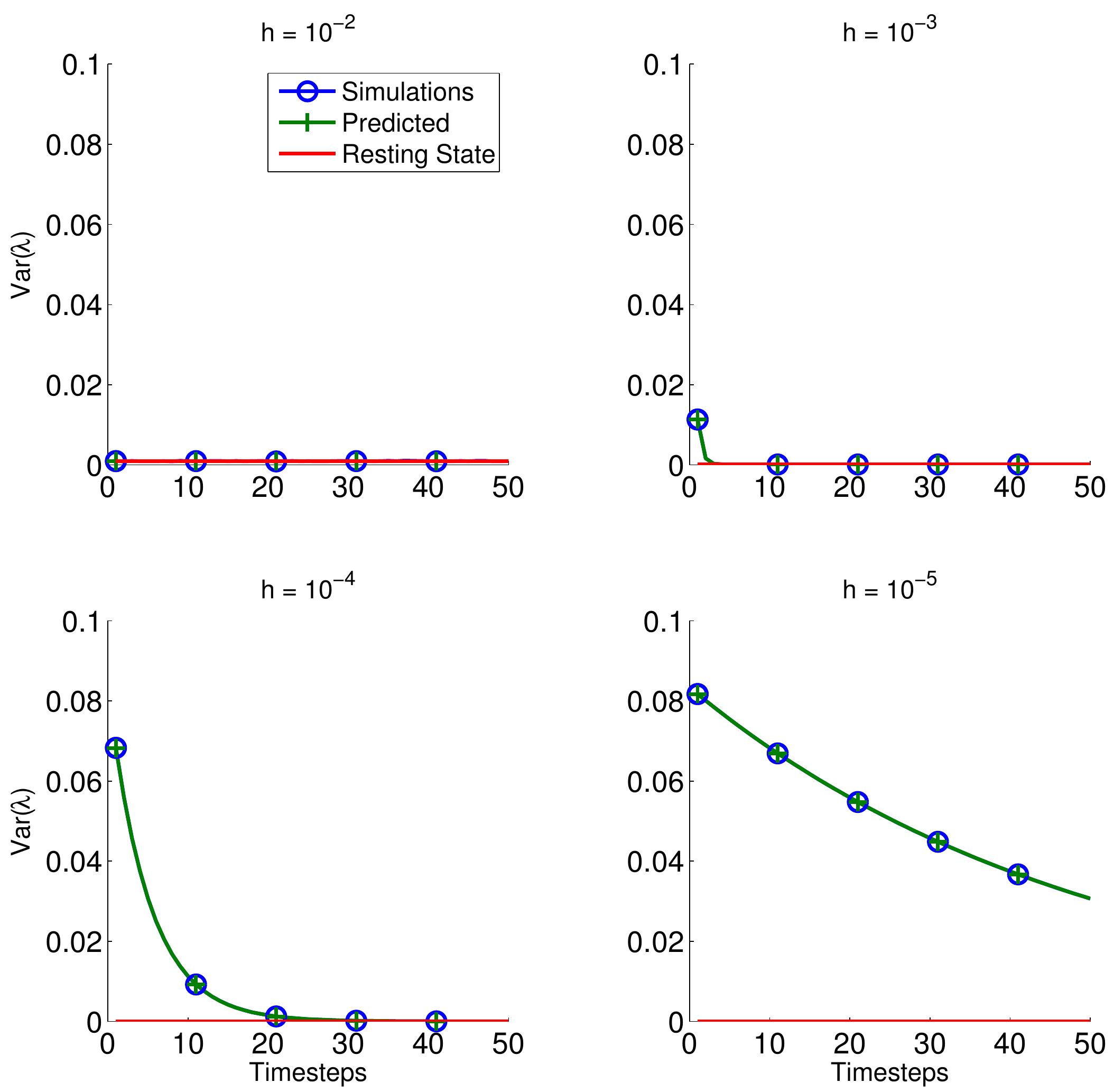}
                \caption{Predicted value of $Var(\lambda)$ and actual value of $Var(\lambda)$ over time for different values of $h$. The rate at which $Var(\lambda)$ approaches its resting state is slower for smaller $h$}
                \label{fig:limVar}
\end{figure}

These results illustrate that the rate at which agents converge to the resting state is dependent on the value $h$ by which agents adopt others agents' viewpoints. Larger values of $h$ allow faster convergence to the resting state, however that resting state will have a larger variance when $h$ is larger.

\section{DISCUSSION}
\label{sec:disc}
Characterising concepts as a weighted sum of attributes is seen throughout the literature \cite{gard2004, hamp1987, lakoffhedges, zadehhedges}. However, this has been proposed in an ad hoc fashion, and many concepts do not adhere to this formulation. Further, no mechanism for determining the weights has been proposed. We have developed a hierarchical model of concept combination from which the characterisation of concepts as a weighted sum of attributes arises naturally, summarised in section \ref{sec:bkg}. We have implemented a simple version of this model within the language game framework in which agents make assertions that consist of a weighted sum of two constituent concepts. We have shown that within a multi-agent simulation of a community of such language users agents can converge to shared weightings (section \ref{sec:simulations}). The weights $\lambda_i$ to which the community of agents converge are related to the distribution of elements $x_j$ in the conceptual space, the membership functions $\mu_{L_j}(x_j)$ used for the constituent concepts, and the reliability $w$ of the agents. We have derived explicit expressions relating the mean of the final weights $\lambda_i$ across all agents to the distribution of the $x_j$, $\mu_{L_j}(x_j)$, and $w$. In a modified updating model that is more amenable to analysis, we have derived expressions for the variance of the $\lambda_i$ in terms of $x_j$, $\mu_{L_j}(x_j)$, $w$ and also $h$, the rate at which agents adopt other agents' concepts. When $h$ is small, i.e. agents are slower to adopt others' concepts, the variance of the $\lambda_i$ is smaller, perhaps indicating that more robust concepts are formed. We have also derived expressions for the speed at which the resting distribution of the $\lambda_i$ are approached, dependent on the value of $h$ (section \ref{sec:analysis}). The results from this model indicate how weightings in a weighted sum of concepts can be related to the elements encountered in a conceptual space. 

 This model is of course extremely simple, and numerous extensions are ongoing. The use of more complex labels $L_i$ is under investigation, with labels being extended to multiple dimensions. We are also developing the updating algorithm in order to combine more than two labels and to include noise in agents' labels. In previous work \cite{ettie, melinghedges}, the agents in the simulations have had varying reliability $w$, which could be incorporated into these simulations. Future work will also incorporate the theoretical aspects alluded to in section \ref{sec:bkg} that account for non-compositional properties such as emergent attributes.

\ack
Martha Lewis gratefully acknowledges support from EPSRC Grant No. EP/E501214/1

\bibliography{../phd}

\begin{thebibliography}{10}

\bibitem{ettie}
Henrietta Eyre and Jonathan Lawry, `Language games with vague categories and
  negations', {\em Adaptive Behavior},  1059712314547318, (2014).

\bibitem{gard2004}
P.~G{\"a}rdenfors, {\em Conceptual spaces: The geometry of thought}, The MIT
  Press, 2004.

\bibitem{hamp1987}
J.~Hampton, `Inheritance of attributes in natural concept conjunctions', {\em
  Memory \& Cognition}, {\bf 15}(1),  55--71, (1987).

\bibitem{hamp}
J.~Hampton, `Conceptual combinations and fuzzy logic', in {\em Concepts and
  Fuzzy Logic}, eds., R.~Belohlavek and G.~J. Klir, The MIT Press, (2011).

\bibitem{kp}
H.~Kamp and B.~Partee, `Prototype theory and compositionality', {\em
  Cognition}, {\bf 57}(2),  129--191, (1995).

\bibitem{lakoffhedges}
G.~Lakoff, `Hedges: A study in meaning criteria and the logic of fuzzy
  concepts', {\em Journal of philosophical logic}, {\bf 2}(4),  458--508,
  (1973).

\bibitem{lawry2004}
J.~Lawry, `A framework for linguistic modelling', {\em Artificial
  Intelligence}, {\bf 155}(1-2),  1--39, (2004).

\bibitem{lawry2009}
J.~Lawry and Y.~Tang, `Uncertainty modelling for vague concepts: A prototype
  theory approach', {\em Artificial Intelligence}, {\bf 173}(18),  1539--1558,
  (2009).

\bibitem{melinghedges}
Martha Lewis and Jonathan Lawry, `The utility of hedged assertions in the
  emergence of shared categorical labels', in {\em Proceedings of the AISB
  Convention 2013}. Society for the Study of Artificial Intelligence and the
  Simulation of Behaviour, (2013).

\bibitem{lewishierarchical}
Martha Lewis and Jonathan Lawry, `A hierarchical conceptual spaces model of
  concept composition', {\em Artificial Intelligence}, (to appear).

\bibitem{osh1981}
D.N. Osherson and E.E. Smith, `On the adequacy of prototype theory as a theory
  of concepts', {\em Cognition}, {\bf 9}(1),  35--58, (1981).

\bibitem{rosch}
E.~Rosch, `Cognitive representations of semantic categories.', {\em Journal of
  Experimental Psychology: General}, {\bf 104}(3),  192, (1975).

\bibitem{smith}
E.E. Smith and D.N. Osherson, `Conceptual combination with prototype concepts',
  {\em Cognitive Science}, {\bf 8}(4),  337--361, (1984).

\bibitem{belp}
Luc Steels, Tony Belpaeme, et~al., `Coordinating perceptually grounded
  categories through language: A case study for colour', {\em Behavioral and
  brain sciences}, {\bf 28}(4),  469--488, (2005).

\bibitem{zadeh1965}
L.A. Zadeh, `Fuzzy sets', {\em Information and Control}, {\bf 8}(3),  338--353,
  (1965).

\bibitem{zadehhedges}
L.A. Zadeh, `A fuzzy-set-theoretic interpretation of linguistic hedges', {\em
  Journal of Cybernetics}, (1972).

\end{thebibliography}

\end{document}